\theoremstyle{plain}
\newtheorem{theorem}{Theorem}[section]
\newtheorem{lemma}[theorem]{Lemma}
\theoremstyle{definition}
\newtheorem{assumption}[theorem]{Assumption}
\theoremstyle{remark}
\icmltitlerunning{Distributed Deep Learning based on Stochastic Gradient Staleness}
\begin{document}

\twocolumn[
\icmltitle{Distributed Deep Learning using Stochastic Gradient Staleness}

% It is OKAY to include author information, even for blind
% submissions: the style file will automatically remove it for you
% unless you've provided the [accepted] option to the icml2025
% package.

% List of affiliations: The first argument should be a (short)
% identifier you will use later to specify author affiliations
% Academic affiliations should list Department, University, City, Region, Country
% Industry affiliations should list Company, City, Region, Country

% You can specify symbols, otherwise they are numbered in order.
% Ideally, you should not use this facility. Affiliations will be numbered
% in order of appearance and this is the preferred way.
\icmlsetsymbol{equal}{*}

\begin{icmlauthorlist}
\icmlauthor{Viet Hoang Pham}{yy}
\icmlauthor{Hyo-Sung Ahn}{yyy}
%\icmlauthor{P}{yyy}
%\icmlauthor{A}{yyy}
\end{icmlauthorlist}
\icmlaffiliation{yy}{Information Technology Department, Posts and Telecommunications Institute of Technology (PTIT), Hanoi, Vietnam. Email: vietph@gm.gist.ac.kr.}

\icmlaffiliation{yyy}{Department of Mechanical and Robotics Engineering, Gwangju Institute of Science and Technology, Gwangju, Korea.}
%\icmlaffiliation{yyy}{S, G}

\icmlcorrespondingauthor{Hyo-Sung Ahn}{hyosung@gist.ac.kr}
%\icmlcorrespondingauthor{A}

% You may provide any keywords that you
% find helpful for describing your paper; these are used to populate
% the "keywords" metadata in the PDF but will not be shown in the document
\icmlkeywords{Distributed Training, GNN}

\vskip 0.3in
]

% this must go after the closing bracket ] following \twocolumn[ ...

% This command actually creates the footnote in the first column
% listing the affiliations and the copyright notice.
% The command takes one argument, which is text to display at the start of the footnote.
% The \icmlEqualContribution command is standard text for equal contribution.
% Remove it (just {}) if you do not need this facility.

\printAffiliationsAndNotice{}  % leave blank if no need to mention equal contribution
%\printAffiliationsAndNotice{\icmlEqualContribution} % otherwise use the standard text.
%%%%%%%%%%%%%%%%%%%%%%%%%%%%%%%%%%%%%%%%%%%%%%%%%%%%%%%%%%%%%%%%%%%%%%%%%%%%%%%%%%%%%%%%%%
\begin{abstract}
%%%%%%%%%%%%%%%%%%%%%%%%%%%%%%%%%%%%%%%%%%%%%%%%%%%%%%%%%%%%%%%%%%%%%%%%%%%%%%%%%%%%%%%%%%
Despite the notable success of deep neural networks (DNNs) in solving complex tasks, the training process still remains considerable challenges. A primary obstacle is the substantial time required for training, particularly as high-performing DNNs tend to become increasingly deep (characterized by a larger number of hidden layers) and require extensive training datasets. To address these challenges, this paper introduces a distributed training method that integrates two prominent strategies for accelerating deep learning: data parallelism and fully decoupled parallel backpropagation algorithm. By utilizing multiple computational units operating in parallel, the proposed approach enhances the amount of training data processed in each iteration while mitigating locking issues commonly associated with the backpropagation algorithm. These features collectively contribute to significant improvements in training efficiency. The proposed distributed training method is rigorously proven to converge to critical points under certain conditions. Its effectiveness is further demonstrated through empirical evaluations, wherein an DNN is trained to perform classification tasks on the CIFAR-10 dataset.
%%%%%%%%%%%%%%%%%%%%%%%%%%%%%%%%%%%%%%%%%%%%%%%%%%%%%%%%%%%%%%%%%%%%%%%%%%%%%%%%%%%%%%%%%%
\end{abstract}
%%%%%%%%%%%%%%%%%%%%%%%%%%%%%%%%%%%%%%%%%%%%%%%%%%%%%%%%%%%%%%%%%%%%%%%%%%%%%%%%%%%%%%%%%%
%%%%%%%%%%%%%%%%%%%%%%%%%%%%%%%%%%%%%%%%%%%%%%%%%%%%%%%%%%%%%%%%%%%%%%%%%%%%%%%%%%%%%%%%%%
\section{Introduction}\label{sec_intro}
%%%%%%%%%%%%%%%%%%%%%%%%%%%%%%%%%%%%%%%%%%%%%%%%%%%%%%%%%%%%%%%%%%%%%%%%%%%%%%%%%%%%%%%%%%
Deep neural networks (DNNs) have emerged as a prominent approach in both academic research and industrial applications. They have demonstrated significant efficacy in addressing complex tasks, encompassing many areas ranging from image classification \cite{YanshengLi2021}, speech processing \cite{AmbujMehrish2023}, to medical applications \cite{BalazsHarangi2018, ShavetaDargan2020} and the robotics field \cite{ArturIstvanKaroly2021}. The remarkable achievements of DNNs are largely attributable to their deeply layered architectures, which enhance the robustness and performance of models in practical implementations. Additionally, training a DNN typically requires a large amount of training data to ensure the model's generalization. The incorporation of multiple hidden layers within the network, coupled with the reliance on extensive training datasets, inevitably results in increased computational demands and prolonged training time.
Most training methods are based on the stochastic gradient descent (SGD) scheme \cite{LeonBottou2018}. At each iteration, a mini-batch of training data is randomly selected, and parameter updates are executed based on the gradient of the cost function corresponding to the selected mini-batch. While employing the cost function of a mini-batch instead of the full dataset increases the number of iterations required for convergence, it substantially reduces the computational time per iteration. Consequently, SGD is considerably more efficient compared to methods utilizing full gradient information. However, with a constant step size, SGD ensures convergence only to a neighborhood of critical points. The backpropagation (BP) algorithm \cite{DavidERumelhart1986} is the standard technique for calculating gradients of cost functions. It necessitates performing a forward pass and a backward pass before parameter updates. These passes must be executed sequentially, layer by layer, which introduces a bottleneck and represents a significant source of time inefficiency. To overcome the limitations of SGD and BP, various parallel and distributed computing strategies \cite{MichaelDiskin2021} have been proposed to facilitate more efficient and timely training processes. These strategies can be classified into model parallelism, data parallelism, pipeline parallelism, and hybrid parallelism.

Model parallelism represents a parallelization strategy wherein a large neural network model is divided into multiple modules, enabling concurrent operations across these modules while utilizing the same mini-batch. This approach facilitates the acceleration of DNN training by assigning distinct parameters and computations to each module, corresponding to specific layers of the model. \cite{ZhouyuanHuo2018} proposed the Decoupled Parallel Backpropagation framework, which mitigates the issue of backward locking by employing delayed error gradients. Extending this concept, \cite{HuipingZhuang2022} introduced a fully decoupled scheme, leveraging delayed gradient techniques to address locking challenges in both forward and backward passes.
In data parallelism, the training dataset is partitioned into subsets and distributed across multiple workers. Each worker maintains a complete replica of the DNN and computes the gradients based on its respective subset of training data. This approach increases the amount of training data processed during each iteration, thereby mitigating the adverse effects associated with the SGD. The global DNN model is constructed through the aggregation of model replicas and/or the gradients computed by the workers.
This aggregation is typically implemented using two primary architectures: the parameter server architecture and the decentralized architecture. The parameter server architecture \cite{ZhenSong2021} is a centralized framework wherein all workers communicate with a server responsible for aggregating gradients and updating the model. In contrast, the decentralized architecture \cite{MahmoudAssran2019} requires communication among neighboring workers.
Pipeline parallelism divides the computational task, including both data and model components, into a series of processing stages. Each stage processes its input from the preceding stage and immediately forwards the results to the subsequent stage \cite{SunwooLee2017, ChiheonKim2020}. However, the inherent dependency between stages imposes constraints on the scalability of this approach. Several studies incorporate elements of both data and model parallelism methods \cite{LinghaoSong2019, JunyaOno2019}, though these efforts primarily target specific neural network architectures without providing comprehensive analysis or generalizable insights.

This paper aims to design a distributed training method that integrates the prominent advantages of data parallelism and model parallelism. Both modules in model parallelism and workers in data parallelism are treated as computational units. In our proposed framework, these computational units are collectively referred to as agents, following the terminology of multi-agent systems. In alignment with data parallelism, the training data is partitioned into multiple subsets and distributed across different groups of agents for parallel processing. These groups are termed data-groups. The communication among data-groups follows the decentralized architecture. Agents within each data-group collaborate to execute the fully decoupled parallel backpropagation algorithm \cite{HuipingZhuang2022}, thereby enhancing computational efficiency during each iteration. Since multiple agents estimate parameters for the same part of the neural network model, they are organized into a model-group. A consensus scheme is applied to each model-group to ensure that the parameters of the corresponding model part achieve consistency across the agents.
The main contributions of this paper are as follows:
\begin{itemize}
\item We integrate data parallelism with the fully decoupled parallel backpropagation algorithm to design a distributed training method that accelerates deep learning without compromising accuracy.
\item We provide convergence analysis of the proposed method, demonstrating its convergence to critical points under common assumptions in deep learning.
\item We conduct experiments by training a deep neural network for a classification task to validate that the proposed distributed training method achieves significant speedup compared to centralized settings.
\end{itemize}
The remainder of the paper is outlined as follows. Section 2 presents the background on deep learning and communication protocols in multi-agent systems. The algorithm is described in detail in Section 3. Convergence analysis is provided in Section 4. Section 5 reports the experimental results to assess the performance of the proposed training method, while Section 6 concludes the paper.

\subsection*{Notations}
We use $\mathbb{R}$ and $\mathbb{R}^{m \times n}$ to denote the set of real numbers and the set of $m \times n$ matrices, respectively.
Let $\textbf{1}_n$ be the vector in $\mathbb{R}^{n}$ whose all elements are $1$ and $\textbf{I}_n$ be the identity matrix in $\mathbb{R}^{n \times n}$. When the dimensions are clear, the subscripts of the matrix $\textbf{I}_n$ and the vector $\textbf{1}_n$ can be removed.
Denote by $\textbf{H}^T$ the transpose matrix of $\textbf{H}$ and $\otimes$ by the Kronecker product. 
We use $\mathbb{E}\left[X\right]$ to denote the expected value of a random variable $X$.
For a given set $\mathcal{H}$, $|\mathcal{H}|$ represents the cardinality of this set. 
When the set $\mathcal{H}$ has a finite number of vectors, i.e., $\mathcal{H} = \{\textbf{a}_1, \textbf{a}_2, \dots, \textbf{a}_n\}$, we use $col \mathcal{H}$ to define the following column vector 
\[col \mathcal{H} = col \{\textbf{a}_1, \textbf{a}_2, \dots, \textbf{a}_n\} = [\textbf{a}_1^T, \textbf{a}_2^T, \dots, \textbf{a}_n^T]^T.\]
%\begin{table}[htb]
%\caption{List of notations.}
%\label{sample-table}
%\vskip 0.15in
%\begin{center}
%%\begin{small}
%%\begin{sc}
%\begin{tabular}{l|l}
%\toprule
%Notation & Description \\
%\midrule
%$\textbf{w}_l$ &  weight vector of layer $l$ \\
%$\overline{\textbf{W}}$ & weight vector of DNN \\
%$\overline{\textbf{W}}_k$ & weight vector of the module $k$ \\
%$\hat{\textbf{w}}_{s,k}$ & weight vector of agent $(s,k)$ \\
%$\hat{\textbf{W}}_s$ & weight vector of D-group $s$\\
%$\hat{\textbf{W}}$ & stacked vector of all D-groups\\
%$\Psi$ & total loss function of the training problem\\
%$\mathcal{D}_s$ & training data subset of $s^{th}$-D-group\\
%$\Psi_s$ & total loss function corresponding to $\mathcal{D}_s$\\
%$\mathcal{B}_s(t)$ & mini-batch sampled from $\mathcal{D}_s$ at time $t$\\
%$\Phi_s(t)$ & loss function corresponding to $\mathcal{B}_s(t)$\\
%\bottomrule
%\end{tabular}
%%\end{sc}
%%\end{small}
%\end{center}
%\vskip -0.1in
%\end{table}
%%%%%%%%%%%%%%%%%%%%%%%%%%%%%%%%%%%%%%%%%%%%%%%%%%%%%%%%%%%%%%%%%%%%%%%%%%%%%%%%%%%%%%%%%%
%%%%%%%%%%%%%%%%%%%%%%%%%%%%%%%%%%%%%%%%%%%%%%%%%%%%%%%%%%%%%%%%%%%%%%%%%%%%%%%%%%%%%%%%%%
\section{Background}\label{sec_bkg}
%%%%%%%%%%%%%%%%%%%%%%%%%%%%%%%%%%%%%%%%%%%%%%%%%%%%%%%%%%%%%%%%%%%%%%%%%%%%%%%%%%%%%%%%%%
This section first revisits some necessary knowledge for DNN training. The backpropagation algorithm as well as  forward, backward,
and update lockings \cite{MaxJaderberg2017} are summarized.
At the end of this section, we introduce some basic notation of graph theory, which are usually used to describe the communication and coordination in multiagent system.
%%%%%%%%%%%%%%%%%%%%%%%%%%%%%%%%%%%%%%%%%%%%%%%%%%%%%%%%%%%%%%%%%%%%%%%%%%%%%%%%%%%%%%%%%%
\subsection{DNN training}\label{subsec_SGD}
%%%%%%%%%%%%%%%%%%%%%%%%%%%%%%%%%%%%%%%%%%%%%%%%%%%%%%%%%%%%%%%%%%%%%%%%%%%%%%%%%%%%%%%%%%
Suppose that we need to train a $L$-layer DNN. Each layer $l, \forall 1 \le l \le L$, produces an activation vector $\textbf{h}_l = \mathcal{A}_l(\textbf{h}_{l-1},\textbf{w}_l)$ taking an input $\textbf{h}_{l-1}$ with the weight vector $\textbf{w}_l$. Here, we use $\textbf{w}_l$ to denote the vectorization of the weight in the layer $l$.
Denote by $d_l$ as the dimension of weight in the layer $l$, i.e., $\textbf{w}_l \in \mathbb{R}^{d_l}$. The parameter vector of the entire DNN is $\overline{\textbf{W}} = \left[\begin{matrix}\textbf{w}_1^T & \textbf{w}_2^T & \cdots & \textbf{w}_L^T\end{matrix}\right]^T \in \mathbb{R}^{\sum_{l=1}^{L}d_l}$.
Thus, the output of the DNN can be represented as $\textbf{h}_L = \mathcal{A}(\textbf{h}_0,\overline{\textbf{W}}) = \mathcal{A}_L(\mathcal{A}_{L-1}(\cdots (\mathcal{A}_2(\mathcal{A}_1(\textbf{h}_0,\textbf{w}_1), \textbf{w}_2)\cdots),\textbf{w}_{L-1}),\textbf{w}_L)$ where $\textbf{h}_0$ is the input of the DNN.
We use $\boldsymbol{\chi}^{(n)} = (\boldsymbol{\xi}^{(n)}, \boldsymbol{\nu}^{(n)})$ to denote a sample of training data. Here, $\boldsymbol{\nu}^{(n)}$ is target when $\boldsymbol{\xi}^{(n)}$ is used as the input of the DNN.
Let $\phi$ be the function used to measure the error between the outputs of the DNN and the targets. Then the loss corresponding to one training data point $\boldsymbol{\chi}^{(n)}$ is \[\phi(\boldsymbol{\chi}^{(n)},\overline{\textbf{W}}) = \phi(\mathcal{A}(\boldsymbol{\xi}^{(n)},\overline{\textbf{W}}),\boldsymbol{\nu}^{(n)}).\]
Given a training dataset $\mathcal{D} = \{\boldsymbol{\chi}^{(n)}: 1 \le n \le N\}$, the total loss function of the training process is
\begin{equation}
\Psi(\overline{\textbf{W}}) = \frac{1}{N}\sum\limits_{\boldsymbol{\chi}^{(n)} \in \mathcal{D}}\phi(\boldsymbol{\chi}^{(n)},\overline{\textbf{W}}).
\end{equation}
Then the training problem is given as the following optimization problem:
\begin{equation}\label{eq_training_problem}
\min\limits_{\overline{\textbf{W}}} \Psi(\overline{\textbf{W}})
\end{equation}

Stochastic gradient based methods are often used for deep learning optimization. In iteration $t$, a mini-batch $\mathcal{B}(t) \subset \mathcal{D}$ is sampled with the cardinality $|\mathcal{B}(t)| = B$. The weight of the DNN is updated as
\begin{equation}\label{eq_SGD}
\overline{\textbf{W}}(t+1) = \overline{\textbf{W}}(t) - \eta_t\nabla\Phi(t)
\end{equation}
where $\eta_t$ is the step size and $\Phi(t) = \frac{1}{B}\sum\limits_{\boldsymbol{\chi}^{(n)} \in \mathcal{B}(t)}\phi(\boldsymbol{\chi}^{(n)},\overline{\textbf{W}}(t))$ is the loss function corresponding to the mini-batch $\mathcal{B}(t)$.
The update law \eqref{eq_SGD} can be written in the more detailed form as
\begin{equation}
\textbf{w}_l(t+1) = \textbf{w}_l(t) - \eta_t\frac{1}{B}\sum\limits_{\boldsymbol{\chi}^{(n)} \in \mathcal{B}(t)}\frac{\partial \phi(\boldsymbol{\chi}^{(n)},\overline{\textbf{W}}(t))}{\partial \textbf{w}_l}
\end{equation}
for every layer $l \in [1, L]$. The notation $\frac{\partial \phi(\boldsymbol{\chi}^{(n)},\overline{\textbf{W}})}{\partial \textbf{w}_l}$ describes the gradient of the loss function $\phi(\boldsymbol{\chi}^{(n)},\overline{\textbf{W}}(t))$ with respect to the weight vector $\textbf{w}_l$.
%%%%%%%%%%%%%%%%%%%%%%%%%%%%%%%%%%%%%%%%%%%%%%%%%%%%%%%%%%%%%%%%%%%%%%%%%%%%%%%%%%%%%%%%%%
\subsection{Backpropagation algorithm}\label{subsec_backpropagation}
%%%%%%%%%%%%%%%%%%%%%%%%%%%%%%%%%%%%%%%%%%%%%%%%%%%%%%%%%%%%%%%%%%%%%%%%%%%%%%%%%%%%%%%%%%
To compute $\frac{\partial \phi(\boldsymbol{\chi}^{(n)},\overline{\textbf{W}})}{\partial \textbf{w}_l}$, we usually use the backpropagation algorithm consisting of two passes of the DNN: the forward pass and the backward pass.
In the forward pass, activation values of the layer $l, \forall 1 \le l \le L$, depends not only its weight vector $\textbf{w}_l$ but also the activation from its previous layer.
\begin{equation}\label{eq_forward_step}
\textbf{h}_l = \mathcal{A}_l(\textbf{h}_{l-1},\textbf{w}_l).
\end{equation}
This sequential dependence is called a \textit{forward locking}.
In the backward pass, the chain rule for gradient is applied:
\begin{subequations}\label{eq_backward_step}
\begin{align}
\frac{\partial \phi(\boldsymbol{\chi}^{(n)},\overline{\textbf{W}})}{\partial \textbf{w}_l} &= \frac{\partial \phi(\boldsymbol{\chi}^{(n)},\overline{\textbf{W}})}{\partial \textbf{h}_l}\frac{\partial \textbf{h}_l}{\partial \textbf{w}_l},\\
\frac{\partial \phi(\boldsymbol{\chi}^{(n)},\overline{\textbf{W}})}{\partial \textbf{h}_l} &= \frac{\partial \phi(\boldsymbol{\chi}^{(n)},\overline{\textbf{W}})}{\partial \textbf{h}_{l+1}} \frac{\partial \textbf{h}_{l+1}}{\partial \textbf{h}_l}.
\end{align}
\end{subequations}
It is obvious that the error gradients $\frac{\partial \phi(\boldsymbol{\chi}^{(n)},\overline{\textbf{W}})}{\partial \textbf{h}_l}$ need to be repeatedly propagated through the DNN from the last layer $l = L$ to the first layer $l = 1$ as the computation in layer $l$ depends on the error gradient from the layer $l+1$. This constraint causes a \textit{backward locking}. It is the main bottle neck in DNN training when $L$ is large since the required time for each step of backward pass \eqref{eq_backward_step} is significantly larger than the one of forward pass \eqref{eq_forward_step}. Additionally, updating the weight vector of the layer $l$ requires the gradient $\frac{\partial \phi(\boldsymbol{\chi}^{(n)},\overline{\textbf{W}})}{\partial \textbf{w}_l}$ which only can be found if $\frac{\partial \phi(\boldsymbol{\chi}^{(n)},\overline{\textbf{W}})}{\partial \textbf{h}_l}$ is available. This is recognized as an \textit{update locking}.
%%%%%%%%%%%%%%%%%%%%%%%%%%%%%%%%%%%%%%%%%%%%%%%%%%%%%%%%%%%%%%%%%%%%%%%%%%%%%%%%%%%%%%%%%%
\subsection{Communication graph for multi-agent system}
%%%%%%%%%%%%%%%%%%%%%%%%%%%%%%%%%%%%%%%%%%%%%%%%%%%%%%%%%%%%%%%%%%%%%%%%%%%%%%%%%%%%%%%%%% 
In multi-agent system, it is usual to use an undirected graph $\mathcal{G} = (\mathcal{V}, \mathcal{E})$ to illustrate the communication network among $S$ agents, in which, $\mathcal{V} = \{1, 2, \dots, S\}$ is the node set and $\mathcal{E} \subset \mathcal{V} \times \mathcal{V}$ is the edge set.
Each node $i \in \mathcal{V} = \{1, 2, \dots, S\}$ corresponds to an agent while an edge $(i,j) \in \mathcal{E}$ represents that the agent $i$ can exchange information with the agent $j$.
As $\mathcal{G}$ is an undirected graph, $(i,j) \in \mathcal{E}$ implies $(j,i) \in \mathcal{E}$ for every pair of nodes $i,j \in \mathcal{V}$.
The neighbor set of the agent $i \in \mathcal{V}$ is denoted by $\mathcal{N}_i = \left\{j \in \mathcal{V}: (i,j) \in \mathcal{E}\right\}$.
The communication graph $\mathcal{G}$ is usually required to be connected. That means there exists at least one path from $i$ to $j$ for any pair $i,j \in \mathcal{V}$, i.e., there is a sequence of edges $(i_1,i_2), (i_2,i_3), \dots, (i_{s-1},i_s)$ such that $i_1 = i, i_s = j$ and $i_l \in \mathcal{V}, (i_{l-1},i_l) \in \mathcal{E}, \forall 2 \le l \le s$.
Let $\tilde{\mathcal{V}}$ be a node subset, i.e., $\tilde{\mathcal{V}} \subset \mathcal{V}$. The subgraph induced by $\tilde{\mathcal{V}}$ is defined as $\tilde{\mathcal{G}} = (\tilde{\mathcal{V}}, \tilde{\mathcal{E}})$ where $\tilde{\mathcal{E}} = \{(i,j) \in \mathcal{E}: i,j \in \tilde{\mathcal{V}}\}$.

For each communication graph, the pattern reflecting its topology can be presented by a weighted matrix $\textbf{P} = [P_{ij}] \in \mathbb{R}^{S \times S}$ as
\begin{equation}\label{eq_weight_matrix}
P_{ij} = \left\{\begin{matrix}\alpha, & (i,j) \in \mathcal{E},\\ 1 - \kappa_i\alpha, & i = j,\\ 0, &\textrm{otherwise}\end{matrix}\right.
\end{equation}
where $\kappa_i$ is the degree of the node $i$ and $\alpha \in \left(0,\frac{1}{\max_{i \in \mathcal{V}}\kappa_i}\right)$ is a given parameter.
The following lemma provides the properties of the weighted matrix $\textbf{P}$ \cite{LinXiao2004}.
\begin{lemma}
The weighted matrix $\textbf{P} \in \mathbb{R}^{S \times S}$ satisfies the following properties:
\begin{enumerate}
\item $\textbf{P}$ is symmetric and doubly stochastic, i.e., $\textbf{P} = \textbf{P}^T$, $\textbf{1}_S^T\textbf{P} = \textbf{1}_S^T$ and $\textbf{P}\textbf{1}_S = \textbf{1}_S$.
\item $\rho\left(\textbf{P} - \frac{1}{S}\textbf{1}_S\textbf{1}_S^T\right) = \gamma < 1$, where $\rho(\textbf{X})$ denotes the spectral radius of matrix $\textbf{X}$.
\end{enumerate}
\end{lemma}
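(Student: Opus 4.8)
The plan is to recognize $\textbf{P}$ as a shifted graph Laplacian and then exploit symmetry together with the connectivity hypothesis. First I would write $\textbf{P} = \textbf{I}_S - \alpha\textbf{L}$, where $\textbf{L} = \textbf{D} - \textbf{A}$ is the Laplacian of $\mathcal{G}$, with $\textbf{D}$ the diagonal degree matrix ($D_{ii} = \kappa_i$) and $\textbf{A}$ the adjacency matrix of $\mathcal{G}$. A direct entrywise comparison against \eqref{eq_weight_matrix} confirms this identity, and it reduces both claims to standard spectral facts about $\textbf{L}$.

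For Property 1, symmetry follows because $\mathcal{G}$ is undirected, so $(i,j)\in\mathcal{E} \Leftrightarrow (j,i)\in\mathcal{E}$ gives $P_{ij}=P_{ji}$ off the diagonal, while the diagonal is trivially symmetric. Nonnegativity of every entry uses the hypothesis $\alpha < 1/\max_{i}\kappa_i$, which forces $1 - \kappa_i\alpha > 0$. For the row sums I would compute $\sum_j P_{ij} = (1-\kappa_i\alpha) + \kappa_i\alpha = 1$, since node $i$ has exactly $\kappa_i$ neighbors, yielding $\textbf{P}\textbf{1}_S = \textbf{1}_S$; the column-sum identity $\textbf{1}_S^T\textbf{P} = \textbf{1}_S^T$ then follows from symmetry.

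For Property 2, since $\textbf{P}$ is real symmetric it admits an orthonormal eigenbasis with real eigenvalues. I would first apply Gershgorin's disc theorem: each eigenvalue lies in $[1-2\kappa_i\alpha,\,1]$, and the bound on $\alpha$ ensures $1-2\kappa_i\alpha > -1$, so every eigenvalue lies in $(-1,\,1]$. Next, connectivity of $\mathcal{G}$ makes the zero eigenvalue of $\textbf{L}$ simple, hence the eigenvalue $1$ of $\textbf{P}$ is simple with eigenvector $\textbf{1}_S$, and all remaining eigenvalues $\lambda_2,\dots,\lambda_S$ satisfy $|\lambda_k| < 1$. Finally I would observe that $\frac{1}{S}\textbf{1}_S\textbf{1}_S^T$ is the orthogonal projection onto the span of $\textbf{1}_S$, so subtracting it sends the eigenpair along $\textbf{1}_S$ to eigenvalue $0$ while leaving every eigenpair orthogonal to $\textbf{1}_S$ untouched. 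The eigenvalues of $\textbf{P} - \frac{1}{S}\textbf{1}_S\textbf{1}_S^T$ are therefore $\{0, \lambda_2,\dots,\lambda_S\}$, whence $\gamma = \rho\!\left(\textbf{P} - \frac{1}{S}\textbf{1}_S\textbf{1}_S^T\right) = \max_{k\ge 2}|\lambda_k| < 1$.

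The main obstacle is pinning down that the eigenvalue $1$ is simple and that no other eigenvalue is $\pm 1$ in magnitude: the upper bound $\lambda_k < 1$ for $k \ge 2$ rests entirely on the connectivity assumption (a disconnected graph would give $1$ with higher multiplicity, forcing $\gamma = 1$), while the lower bound $\lambda_k > -1$ — which rules out a $-1$ eigenvalue that would likewise make $\gamma = 1$ — is exactly what the restriction $\alpha \in \left(0, 1/\max_{i}\kappa_i\right)$ buys us through Gershgorin. Everything else is routine verification.
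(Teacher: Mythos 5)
Your argument is correct. Note, however, that the paper does not prove this lemma at all: it is stated as a known result imported from the cited reference \cite{LinXiao2004} (the Xiao--Boyd analysis of fast linear iterations for distributed averaging), so there is no in-paper proof to compare against. Your derivation is the standard self-contained one and all the steps check out: the identity $\textbf{P}=\textbf{I}_S-\alpha\textbf{L}$ follows entrywise from \eqref{eq_weight_matrix}; the row sums give stochasticity and symmetry gives double stochasticity; Gershgorin applied to the $i$-th row yields the real interval $[1-2\kappa_i\alpha,\,1]$, and $\alpha\max_i\kappa_i<1$ pushes the union of these intervals strictly above $-1$; connectivity makes $\mu=0$ a simple Laplacian eigenvalue, so $\lambda=1$ is a simple eigenvalue of $\textbf{P}$ with eigenvector $\textbf{1}_S$; and since $\textbf{P}$ is symmetric with $\textbf{P}\textbf{1}_S=\textbf{1}_S$, subtracting the projector $\frac{1}{S}\textbf{1}_S\textbf{1}_S^T$ replaces that eigenvalue by $0$ while fixing the spectrum on $\textbf{1}_S^{\perp}$, giving $\gamma=\max_{k\ge 2}|\lambda_k|<1$. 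You correctly identify the two places where the hypotheses are actually used (connectivity for the upper bound, the range of $\alpha$ for the lower bound). One cosmetic point: Gershgorin locates each eigenvalue in the \emph{union} of the row intervals rather than in every individual $[1-2\kappa_i\alpha,1]$, but since every interval in the union has left endpoint at least $1-2\alpha\max_i\kappa_i>-1$, your conclusion is unaffected.
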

%%%%%%%%%%%%%%%%%%%%%%%%%%%%%%%%%%%%%%%%%%%%%%%%%%%%%%%%%%%%%%%%%%%%%%%%%%%%%%%%%%%%%%%%%%
%%%%%%%%%%%%%%%%%%%%%%%%%%%%%%%%%%%%%%%%%%%%%%%%%%%%%%%%%%%%%%%%%%%%%%%%%%%%%%%%%%%%%%%%%%
\section{Distributed training}\label{sec_main}
%%%%%%%%%%%%%%%%%%%%%%%%%%%%%%%%%%%%%%%%%%%%%%%%%%%%%%%%%%%%%%%%%%%%%%%%%%%%%%%%%%%%%%%%%%
In this section, we give the details of the proposed framework for distributed deep training. Our approach is the combination of the data parallelism and the fully decoupled parallel backpropagation algorithm.
%%%%%%%%%%%%%%%%%%%%%%%%%%%%%%%%%%%%%%%%%%%%%%%%%%%%%%%%%%%%%%%%%%%%%%%%%%%%%%%%%%%%%%%%%%
\subsection{Data parallelism}
%%%%%%%%%%%%%%%%%%%%%%%%%%%%%%%%%%%%%%%%%%%%%%%%%%%%%%%%%%%%%%%%%%%%%%%%%%%%%%%%%%%%%%%%%%
Let the training dataset be divided into $S$ subsets as
\[\mathcal{D} = \mathcal{D}_1 \cup \mathcal{D}_2 \cup \cdots \cup \mathcal{D}_S,\]
\[\textrm{ and } \mathcal{D}_i \cap \mathcal{D}_j = \emptyset \textrm{ if } i \neq j.\]
Denote by $\hat{\textbf{W}}_s$ the local copy of the DNN weight vector $\overline{\textbf{W}}$ corresponding to the training data subset $\mathcal{D}_s$ and the associated cost function is defined by
\[\Psi_s(\hat{\textbf{W}}_s) = \frac{1}{N}\sum\limits_{\boldsymbol{\chi}^{(n)} \in \mathcal{D}_s}\phi(\boldsymbol{\chi}^{(n)},\hat{\textbf{W}}_s).\]
Then the training problem \eqref{eq_training_problem} can be reformulated into the following equivalent problem:
\begin{subequations}\label{eq_dl_problem}
\begin{align}
\min\limits_{\hat{\textbf{w}}_{s}, \forall s}& \sum\limits_{s = 1}^{S} \Psi_s(\hat{\textbf{W}}_s)\\
\textrm{s.t. }& \hat{\textbf{W}}_{1} = \hat{\textbf{W}}_{2} = \cdots = \hat{\textbf{W}}_{S}.
\end{align}
\end{subequations}
In the data parallelism, the optimal solution of the problem \eqref{eq_dl_problem} is found by the cooperation of $S$ workers. Assuming the communication among these workers is represented by a connected and undirected graph $\mathcal{G} = (\mathcal{V}, \mathcal{E})$ where the node set is $\mathcal{V} = \{1, 2, \dots, S\}$ and the edge set is $\mathcal{E} \subsetneq \mathcal{V} \times \mathcal{V}$. Each node $s\in\mathcal{V}$ corresponds to one worker. The subset $\mathcal{D}_s$ and the local copy $\hat{\textbf{W}}_s$ are assigned to the worker $s$. Additionally, let $\textbf{P} = [P_{sr}] \in \mathbb{R}^{S \times S}$ be the weighted matrix corresponding to the graph $\mathcal{G}$ (as defined by \eqref{eq_weight_matrix}).
The following update rule can be implemented for each worker $s \in \mathcal{V}$ to cooperatively solve the problem \eqref{eq_dl_problem}:
\begin{equation}\label{eq_consensus_update}
\hat{\textbf{W}}_s(t+1) = \sum\limits_{r \in \mathcal{N}_s}P_{sr}\left(\hat{\textbf{W}}_r(t) - \eta_t\nabla\Phi_r(t)\right)
\end{equation}
where $\Phi_s(t) = \frac{|\mathcal{D}_s|}{BN}\sum\limits_{\boldsymbol{\chi}^{(n)} \in \mathcal{B}_s(t)}\phi(\boldsymbol{\chi}^{(n)},\hat{\textbf{W}}_s)$ is the cost function corresponding to the mini-batch $\mathcal{B}_s(t)$ sampled from the training data subset $\mathcal{D}_s$ at time $t \ge 0$. The mini-batch $\mathcal{B}_s(t)$ is chosen to have the cardinality $|\mathcal{B}_s(t)| = B$ for all $s \in \mathcal{V}$ and $t \ge 0$.
Data parallelism focuses on distributing the original training data set across multiple workers to alleviate the computation load required in centralized setting. However, each worker still copes with locking issues when computing the gradient $\nabla\Phi_s(t)$. To overcome this difficulty, the technique to decouple the backpropagation algorithm needs to be employed.
%%%%%%%%%%%%%%%%%%%%%%%%%%%%%%%%%%%%%%%%%%%%%%%%%%%%%%%%%%%%%%%%%%%%%%%%%%%%%%%%%%%%%%%%%%
\subsection{Fully decoupled parallel backpropagation algorithm}\label{subsec_stalegradient}
%%%%%%%%%%%%%%%%%%%%%%%%%%%%%%%%%%%%%%%%%%%%%%%%%%%%%%%%%%%%%%%%%%%%%%%%%%%%%%%%%%%%%%%%%%
There are some versions of decoupled parallel backpropagation technique to enhance the parallel computation in deep learning. In this part, we review the algorithm proposed in \cite{HuipingZhuang2022} and applied to the stochastic gradient method \eqref{eq_SGD}.
\begin{figure*}[htb]
\begin{center}
\centerline{\includegraphics[width=1.8\columnwidth]{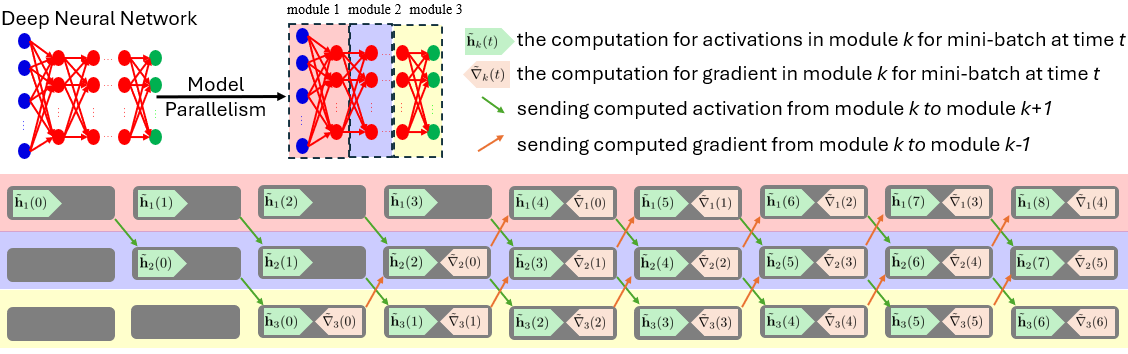}}
\caption{Training DNN using stale gradient with three modules.}
\label{fig_stale_gradient}
%\end{center}
%\end{figure*}
%\begin{figure*}[htb]
%\begin{center}
\vspace{5pt}
\centerline{\includegraphics[width=1.8\columnwidth]{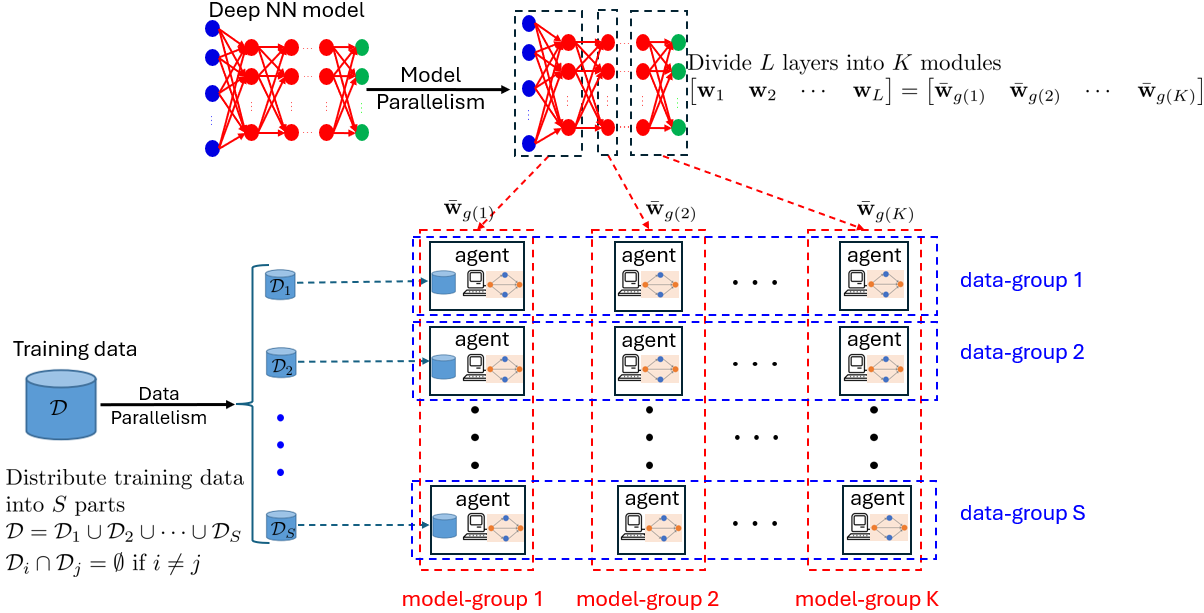}}
\caption{Proposed framework.}
\label{fig_proposed_framework}
\end{center}
\end{figure*}
The key features of this fully decoupled parallel backpropagation algorithm consist of the continuous operation for minimizing idle time and the use of stale gradient in updating DNN weight.

Let the layer indexes $\{1, 2, \dots, L\}$ be split into $K$ groups as $\{g(1), g(2), \dots, g(K)\}$ where $g(k) = \{p_k, p_k+1, \dots, q_{k}\}, p_1 = 0, p_{k} < q_k = p_{k+1}-1$, and $q_K = L$.
Denote by $\bar{\textbf{w}}_{g(k)}$ the weights corresponding to the layers in the group $g(k)$. We have
\[\overline{\textbf{W}} = \textrm{col}\{\bar{\textbf{w}}_{g(1)}, \bar{\textbf{w}}_{g(2)}, \cdots, \bar{\textbf{w}}_{g(K)}\}\]
During the training process, the computation for the update of $\bar{\textbf{w}}_{g(k)}$ is assigned to one module $k$.
In each iteration, the module $k$ performs forward pass and backward pass corresponding to two different mini-batches. The activation of the last layer $q_k$ and the gradient of the first layer $p_k$, which have been computed in this iteration, will be be sent respectively to the module $k+1$ and the module $k-1$.
\begin{itemize}
\item \textbf{Forward}: At each iteration $t \ge 0$, module $1$ samples a mini-batch $\mathcal{B}(t) \subset \mathcal{D}$ and module $k \ge 2$ receives the activation $\textbf{h}_{q_{k-1}}(t-k+1)$ from the module $k-1$ if $t \ge k - 1$. After that, each module $k$ computes the activation $\textbf{h}_l^{(k)}(t-k+1)$ from $l = p_k$ to $q_k$; then sends $\textbf{h}_{q_k}^{(k)}(t-k+1)$ to the module $k+1$ if $k < K$. Here, we use the notation $\textbf{h}_l^{(k)}(\tau)$, if $\tau \ge 0$, to denote the stacked vector of activation values computed by the module $k$ for the layer $l$ corresponding to the mini-batch sampled at the time $\tau$ with the weight vector $\textbf{w}_l(\tau+k-1)$. That means $\textbf{h}_0^{(k)}(\tau) = \textrm{col}\{\boldsymbol{\chi}^{(n)}: \boldsymbol{\chi}^{(n)} \in \mathcal{B}(\tau)\}$, $\textbf{h}_{p_k}^{(k)}(\tau) = \mathcal{A}_{p_k}(\textbf{h}_{q_{k-1}}^{(k-1)}(\tau),\textbf{w}_{p_k}(\tau+k-1))$ and $\textbf{h}_l^{(k)}(\tau) = \mathcal{A}_l(\textbf{h}_{l-1}^{(k)}(\tau),\textbf{w}_l(\tau+k-1)) \forall p_k+1 \le l \le q_k$. In addition, let $\tilde{\textbf{h}}_k(\tau) = \textrm{col}\{\textbf{h}_{p_k}^{(k)}(\tau), \textbf{h}_{p_k+1}^{(k)}(\tau), \cdots, \textbf{h}_{q_k}^{(k)}(\tau)\}$.
\item \textbf{Backward}: With the forward process described as above, the last module $K$ can only compute its activation corresponding to the mini-batch $\mathcal{B}(t)$ at the time $t+K-1$, then it can compute its corresponding gradients at this time. Define $\tilde{\textbf{W}}(\tau) = \textrm{col}\{\bar{\textbf{w}}_{g(1)}(\tau), \bar{\textbf{w}}_{g(2)}(\tau+1), \cdots, \bar{\textbf{w}}_{g(K)}(\tau+K-1)\}$, we have $\textbf{h}_L^{(K)}(\tau) = \mathcal{A}(\textbf{h}_0^{(1)}(\tau),\tilde{\textbf{W}}(\tau))$.
As the gradients of the module $k$ depends on the one of the first layer in the module $k+1$, at iteration $t$, the module $k$ has the available information to compute its error gradients associated with the mini-batch sampled at the time $t-2K+k+1$. The necessary information includes $\frac{\partial \phi(\boldsymbol{\chi}^{(n)},\tilde{\textbf{W}}(t-2K+k+1))}{\partial \textbf{h}_{q_k+1}}, \forall \boldsymbol{\chi}^{(n)} \in \mathcal{B}(t-2K+k+1)$, received from the module $k+1$ (if $k < K$) and $\textbf{w}_{l}(t-2K+2k)), \forall p_k \le l \le q_k$, belonging to itself. So the module $k$ can compute the error gradient $\frac{\partial \phi(\boldsymbol{\chi}^{(n)},\tilde{\textbf{W}}(t-2K+k+1))}{\partial \textbf{h}_{l}}, \forall \boldsymbol{\chi}^{(n)} \in \mathcal{B}(t-2K+k+1),$ for layers from $l = q_k$ to $p_k$.
\end{itemize}
The computation process of the fully decoupled parallel backpropagation algorithm is illustrated in Fig. \ref{fig_stale_gradient} where the DNN is split into three modules.
The update of the weights for each module is based on the stale gradient as follows.
\begin{equation}
\bar{\textbf{w}}_{g(k)}(t+1) = \bar{\textbf{w}}_{g(k)}(t) - \tilde{\nabla}_{g(k)}\Phi(t-2K+k+1)
\end{equation}
where $\tilde{\nabla}_{g(k)}\Phi(\tau) = \sum\limits_{\boldsymbol{\chi}^{(n)} \in \mathcal{B}(\tau)}\frac{\partial \phi(\boldsymbol{\chi}^{(n)},\tilde{\textbf{W}}(\tau))}{\partial \bar{\textbf{w}}_{g(k)}}$ if $\tau \ge 0$ and $\tilde{\nabla}_{g(k)}\Phi(\tau) = \textbf{0}$ otherwise.
%%%%%%%%%%%%%%%%%%%%%%%%%%%%%%%%%%%%%%%%%%%%%%%%%%%%%%%%%%%%%%%%%%%%%%%%%%%%%%%%%%%%%%%%%%
\subsection{Proposed framework}
%%%%%%%%%%%%%%%%%%%%%%%%%%%%%%%%%%%%%%%%%%%%%%%%%%%%%%%%%%%%%%%%%%%%%%%%%%%%%%%%%%%%%%%%%%
In this paper, we propose a distributed deep learning by employing the decoupled parallel  backpropagation algorithm into the gradient computation of the update \eqref{eq_consensus_update}.
Assume there are $SK$ agents cooperating to train the DNN. These agents are group into $S$ data-groups and $K$ model-groups. Each data-group $s, \forall 1 \le s \le S,$ consists of $K$ agents working on the training data subset $\mathcal{D}_s$ while each model-group $k, \forall 1 \le k \le K$, includes $S$ agents assigned for computation in the module $k$. Fig. \ref{fig_proposed_framework} illustrates our proposed framework.
We use $(s,k)$ to denote the index of each agent representing its membership in data-group $s$ and model-group $k$. Let $\hat{\textbf{w}}_{s,k}$ be the local weight vector of the agent $(s,k)$. This is a copy of the weight vector of the module $k$, i.e., $\bar{\textbf{w}}_{g(k)}$. In addition, we have
\begin{equation}
\hat{\textbf{W}}_s = \left[\begin{matrix}\hat{\textbf{w}}_{s,1}^T & \hat{\textbf{w}}_{s,2}^T & \cdots & \hat{\textbf{w}}_{s,K}^T\end{matrix}\right]^T, \forall 1 \le s \le S.
\end{equation}
The constraint (\ref{eq_dl_problem}b) can be rewritten as
\begin{equation}
\hat{\textbf{w}}_{1,k} = \hat{\textbf{w}}_{2,k} = \cdots = \hat{\textbf{w}}_{S,k}, \forall 1 \le k \le K.
\end{equation}
The task of sampling a mini-batch training data $\mathcal{B}_s(t) \subsetneq \mathcal{D}_s$ belongs to the agent $(s,1)$ for all $t \ge 0$. Employing the decoupled parallel algorithm, in every iteration $t \ge 0$, each agent $(s,k)$ computes activation vectors $\textbf{h}_l^{(s,k)}(t-k+1)$ for all layers $l = p_k, \dots, q_k$ corresponding to the mini-batch $\mathcal{B}_s(t-k+1)$ and computes the error gradient $\frac{\partial \phi(\boldsymbol{\chi}^{(n)},\hat{\textbf{W}}_s(t-2K+k+1))}{\partial \hat{\textbf{w}}_{s,k}}$ for every sample $\boldsymbol{\chi}^{(n)} \in \mathcal{B}_s(t-2K+k+1)$.

Let $\mathcal{G}^{comm} = (\mathcal{V}^{comm}, \mathcal{E}^{comm})$ be the communication graph of the multiagent system where $\mathcal{V}^{comm} = \{(1,1), (1,2), \dots, (1,K), \dots, (S,1), (S,2), \dots, (S,K)\}$ and $\mathcal{E}^{comm} \subsetneq \mathcal{V}^{comm} \times \mathcal{V}^{comm}$.
Denote by $\mathcal{V}_s^D$ and $\mathcal{V}_k^M$ the node sets corresponding to the data-group $s$ and model-group $k$, respectively. Accordingly, we define $\mathcal{G}_s^D$ and $\mathcal{G}_k^M$ as the sub-graphs induced by the node sets $\mathcal{V}_s^D, \forall 1 \le s \le S$, and $\mathcal{V}_k^M, \forall 1 \le k \le K$, from the communication graph $\mathcal{G}^{comm}$, respectively.
Because agents in every data-group $s$ need to compute activation values and gradients of $K$ modules, their computations are consecutively interdependent. So $\mathcal{G}_s^D$ must be a line.
As the agents in each model-group $k$ estimate the same weights for a module $k$, it is required that $\mathcal{G}_k^M$ must be a connected graph to allow the agreement for estimated weight vectors of agents in $\mathcal{V}_k^M$.
\begin{assumption}\label{aspt_topology}
The communication graph satisfies that
\begin{enumerate}
\item $\mathcal{G}_s^D$ is an undirected line, $\forall 1 \le s \le S$,
\item $\mathcal{G}_k^M$ is undirected and connected, $\forall 1 \le k \le K$.
\end{enumerate}
\end{assumption}
In the remainder of this paper, we assume that all sub-graph $\mathcal{G}_k^M, \forall 1 \le k \le K$, have the same topology as the connected graph $\mathcal{G}$ (mentioned in Subsection 3.1) for easy formulation.
%\begin{remark}
%When there are less than $SK$ agents, our proposed framework still can be applied. Some neighboring agents can be merged into one.
%\end{remark}
Let $\mathcal{N}_{s,k}$ be the set of neighboring agents of the agent $(s,k)$ in the module-group $k$, i.e., $\mathcal{N}_{s,k} = \{(r,k) \in \mathcal{V}_k^M: ((s,k), (r,k)) \in \mathcal{E}^{comm}\}$.
Using stale gradient in \eqref{eq_consensus_update}, the update for each agent $(s,k)$ is given as follows.
\begin{subequations}\label{eq_dl_update}
\begin{align}
&\hat{\textbf{u}}_{s,k}(t) = \hat{\textbf{w}}_{s,k}(t) - \eta_t\hat{\nabla}_{\hat{\textbf{w}}_{s,k}}\Phi_s(t-2K+k+1)\\
&\hat{\textbf{w}}_{s,k}(t+1) = \sum\limits_{(r,k) \in \mathcal{N}_{s,k}}P_{sr}\hat{\textbf{u}}_{r,k}(t)
\end{align}
\end{subequations}
where $\hat{\textbf{u}}_{s,k}$ is a virtual variable vector, $\Phi_s(\tau) = \frac{|\mathcal{D}_s|}{BN} \sum\limits_{\boldsymbol{\chi}^{(n)} \in \mathcal{B}_s(\tau)}\phi(\boldsymbol{\chi}^{(n)},\tilde{\textbf{W}}_s(\tau))$ is the local cost function corresponding to the mini-batch $\mathcal{B}_s(\tau)$, and $\hat{\nabla}_{\hat{\textbf{w}}_{s,k}}\Phi_s(\tau) = \frac{|\mathcal{D}_s|}{BN}\sum\limits_{\boldsymbol{\chi}^{(n)} \in \mathcal{B}_s(\tau)}\frac{\partial \phi(\boldsymbol{\chi}^{(n)},\tilde{\textbf{W}}_s(\tau))}{\partial \hat{\textbf{w}}_{s,k}}$ if $\tau \ge 0$ and $\hat{\nabla}_{\hat{\textbf{w}}_{s,k}}\Phi_s(\tau) = \textbf{0}$ otherwise. Similar to $\tilde{\textbf{W}}(\tau)$, we define $\tilde{\textbf{W}}_s(\tau)$ as
\[\tilde{\textbf{W}}_s(\tau) = \textrm{col}\{\hat{\textbf{w}}_{s,k}(\tau+k-1): 1 \le k \le K\}.\]
We summarize the proposed distributed training method in Algorithm \ref{alg_proposed} (presented in Appendix-D).
%%%%%%%%%%%%%%%%%%%%%%%%%%%%%%%%%%%%%%%%%%%%%%%%%%%%%%%%%%%%%%%%%%%%%%%%%%%%%%%%%%%%%%%%%%
%%%%%%%%%%%%%%%%%%%%%%%%%%%%%%%%%%%%%%%%%%%%%%%%%%%%%%%%%%%%%%%%%%%%%%%%%%%%%%%%%%%%%%%%%%
\section{Convergence analysis}
%%%%%%%%%%%%%%%%%%%%%%%%%%%%%%%%%%%%%%%%%%%%%%%%%%%%%%%%%%%%%%%%%%%%%%%%%%%%%%%%%%%%%%%%%%
This section analyzes the convergence of Algorithm \ref{alg_proposed} under some commonly encountered assumptions in deep training.
\begin{assumption}\label{aspt_lipschitz}
The gradient of the loss function $\phi\left(\boldsymbol{\chi}^{(n)}, \overline{\textbf{W}}\right)$ is Lipschitz continuous with Lipschitz constant $\varrho > 0$ for all data point $\boldsymbol{\chi}^{(n)} \in \mathcal{D}$. That means, for every $\boldsymbol{\chi}^{(n)} \in \mathcal{D}$, we have $\left|\left|\nabla\phi\left(\boldsymbol{\chi}^{(n)}, \overline{\textbf{W}}^{(1)}\right) - \nabla\phi\left(\boldsymbol{\chi}^{(n)}, \overline{\textbf{W}}^{(2)}\right)\right|\right| \le \varrho\left|\left|\overline{\textbf{W}}^{(1)} - \overline{\textbf{W}}^{(2)}\right|\right|$ for all $\overline{\textbf{W}}^{(1)}, \overline{\textbf{W}}^{(2)} \in \mathbb{R}^d$.
\end{assumption}
\begin{assumption}\label{aspt_unnoised}
Each agent $(s,1)$ samples mini-batches satisfying
\[\mathbb{E}\left[\nabla_{\hat{\textbf{W}}_s}\Phi_s(t)\right] = \nabla\Psi_s(\hat{\textbf{W}}_s(t)), \forall t \ge 0\]
\end{assumption}
To bound the variance of the stochastic gradient, we assume the second moment of the stochastic gradient is upper bounded.
\begin{assumption}\label{aspt_secondmoment}
There exists constant $\sigma > 0$ such that
\begin{equation}
\left|\left|\nabla \phi\left(\boldsymbol{\xi}^{(n)}, \overline{\textbf{W}}\right)\right|\right|^2 \le \sigma^2, \forall \overline{\textbf{W}} \in \mathbb{R}^d,
\end{equation}
for every data point $\boldsymbol{\xi}^{(n)} \in \mathcal{D}$.
\end{assumption}
Let $\hat{\textbf{W}}$ and $\tilde{\textbf{W}}^{avr}(t)$ be the staked vector and the average of estimated weight vectors of all data-groups, respectively. We have
\[\hat{\textbf{W}} = \left[\begin{matrix}\hat{\textbf{W}}_1^T & \hat{\textbf{W}}_2^T & \cdots & \hat{\textbf{W}}_S^T\end{matrix}\right]^T\]
\[\tilde{\textbf{W}}^{avr}(t) = \frac{1}{S}\sum_{s = 1}^{S}\hat{\textbf{W}}_s(t) = \frac{1}{S}\left(\textbf{1}_S^T \otimes \textbf{I}_S\right) \hat{\textbf{W}}(t)\]
Let $\boldsymbol{\delta}(t) = \hat{\textbf{W}}(t) - \left(\textbf{1}_S \otimes \textbf{I}_d\right) \tilde{\textbf{W}}^{avr}(t)$ be the consensus error at the time $t \ge 0$.
The following lemma provides the bound for the consensus error:
\begin{lemma}\label{lm_consensus}
Under Assumption \ref{aspt_topology} and Assumption \ref{aspt_secondmoment}, it holds for all $t \ge 0$
\begin{equation}\label{eq_main_convergence_error}
\left|\left|\boldsymbol{\delta}(t+1)\right|\right| \le \gamma^{t+1}\left|\left|\boldsymbol{\delta}(0)\right|\right| + \sigma\sqrt{\frac{K}{BS}}\sum\limits_{\tau = 0}^{t}\gamma^{t+1-\tau}\eta_{\tau}
\end{equation} 
\end{lemma}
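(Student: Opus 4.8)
The plan is to collapse the per-agent update \eqref{eq_dl_update} into a single linear recursion on the stacked vector $\hat{\textbf{W}}(t)$ and then exploit the contraction property of the mixing matrix. Since the consensus step in \eqref{eq_dl_update} mixes, within each model-group $k$, the virtual variables across the $S$ data-groups using the scalar weights $P_{sr}$, and since the same scalar $P_{sr}$ multiplies every coordinate of $\hat{\textbf{w}}_{r,k}$, the full system can be written as $\hat{\textbf{W}}(t+1) = (\textbf{P}\otimes\textbf{I}_d)\bigl(\hat{\textbf{W}}(t) - \eta_t\hat{\textbf{G}}(t)\bigr)$, where $\hat{\textbf{G}}(t)$ is the stacked stochastic-gradient vector whose $(s,k)$ block equals $\hat{\nabla}_{\hat{\textbf{w}}_{s,k}}\Phi_s(t-2K+k+1)$. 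Writing $\textbf{J} = \frac{1}{S}\textbf{1}_S\textbf{1}_S^T$, the averaging identity $(\textbf{1}_S\otimes\textbf{I}_d)\tilde{\textbf{W}}^{avr}(t) = (\textbf{J}\otimes\textbf{I}_d)\hat{\textbf{W}}(t)$ turns the definition of the consensus error into $\boldsymbol{\delta}(t) = \bigl((\textbf{I}_S - \textbf{J})\otimes\textbf{I}_d\bigr)\hat{\textbf{W}}(t)$.

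The crux is deriving a self-contained recursion for $\boldsymbol{\delta}$. Multiplying the vectorized update on the left by $(\textbf{I}_S-\textbf{J})\otimes\textbf{I}_d$ and using that $\textbf{P}$ is doubly stochastic (so $\textbf{J}\textbf{P}=\textbf{P}\textbf{J}=\textbf{J}$ by the stated properties of $\textbf{P}$), I obtain $(\textbf{I}_S-\textbf{J})\textbf{P} = \textbf{P} - \textbf{J}$. Because $(\textbf{P}-\textbf{J})\textbf{1}_S = \textbf{0}$, the matrix $\textbf{P}-\textbf{J}$ annihilates the average component $(\textbf{1}_S\otimes\textbf{I}_d)\tilde{\textbf{W}}^{avr}(t)$ of $\hat{\textbf{W}}(t)$, whence $\bigl((\textbf{P}-\textbf{J})\otimes\textbf{I}_d\bigr)\hat{\textbf{W}}(t) = \bigl((\textbf{P}-\textbf{J})\otimes\textbf{I}_d\bigr)\boldsymbol{\delta}(t)$. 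This gives the clean recursion $\boldsymbol{\delta}(t+1) = \bigl((\textbf{P}-\textbf{J})\otimes\textbf{I}_d\bigr)\bigl(\boldsymbol{\delta}(t) - \eta_t\hat{\textbf{G}}(t)\bigr)$. As $\textbf{P}$ is symmetric, $\textbf{P}-\textbf{J} = \textbf{P}-\frac{1}{S}\textbf{1}_S\textbf{1}_S^T$ is symmetric with spectral radius $\gamma$, so its spectral norm is exactly $\gamma$ and $\|(\textbf{P}-\textbf{J})\otimes\textbf{I}_d\| = \gamma$. Taking norms and using the triangle inequality yields the one-step estimate $\|\boldsymbol{\delta}(t+1)\| \le \gamma\|\boldsymbol{\delta}(t)\| + \gamma\eta_t\|\hat{\textbf{G}}(t)\|$.

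It then remains to bound $\hat{\textbf{G}}(t)$ uniformly and to unroll. Each per-sample partial gradient $\partial\phi/\partial\hat{\textbf{w}}_{s,k}$ is a sub-block of the full gradient $\nabla\phi$, so by Assumption \ref{aspt_secondmoment} its norm is at most $\sigma$; passing from the per-sample sum to the mini-batch average by convexity of $\|\cdot\|^2$ and then summing the block contributions over the $K$ modules and $S$ data-groups, together with the balanced split $|\mathcal{D}_s| = N/S$, gives a uniform bound of the form $\|\hat{\textbf{G}}(t)\| \le \sigma\sqrt{K/(BS)}$. The staleness is immaterial at this point: the delayed indices $t-2K+k+1$ only change which mini-batch and which weights enter $\hat{\textbf{G}}(t)$ (zero blocks for negative indices only help), leaving the norm bound intact. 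Substituting into the one-step inequality and unrolling the geometric recursion from $t$ down to $0$ — where the factor $\gamma$ on the gradient term at step $\tau$ combines with the $t-\tau$ subsequent contractions to produce the weight $\gamma^{t+1-\tau}$ — reproduces \eqref{eq_main_convergence_error} exactly.

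The step I expect to be the main obstacle is the recursion derivation: correctly recognizing that the stacked mixing operator is $\textbf{P}\otimes\textbf{I}_d$ even though consensus acts only within model-groups, and then verifying both that $\textbf{P}-\textbf{J}$ renders the dynamics of $\boldsymbol{\delta}$ self-contained (by killing the average component) and that it contracts by precisely $\gamma$. The gradient bound and the final summation are routine once the factors $K$, $B$, and $S$ are tracked carefully through the Kronecker and stacking structure.
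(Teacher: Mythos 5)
Your proposal follows essentially the same route as the paper's proof: both stack the per-agent updates into $\hat{\textbf{W}}(t+1)=(\textbf{P}\otimes\textbf{I}_d)(\hat{\textbf{W}}(t)-\eta_t\hat{\nabla}\Upsilon(t))$, use the doubly-stochastic and symmetric structure of $\textbf{P}$ to reduce the consensus error to a self-contained contraction driven by $\textbf{P}-\frac{1}{S}\textbf{1}_S\textbf{1}_S^T$ with norm $\gamma$, bound the stacked gradient by $\sigma\sqrt{K/(BS)}$ via Assumption \ref{aspt_secondmoment}, and unroll the geometric recursion. The only cosmetic difference is that you take norms step by step while the paper unrolls the exact vector recursion first and then bounds $\|\boldsymbol{\Gamma}^{t+1-\tau}\|\le\gamma^{t+1-\tau}$; the two are equivalent.
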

\begin{proof}
See Appendix-A.
\end{proof}
%%%%%%%%%%%%%%%%%%%%%%%%%%%%%%%%%%%%%%%%%%%%%%%%%%%%%%%%%%%%%%%%%%%%%%%%%%%%%%%%%%%%%%%%%%
\subsection{Fixed stepsize}
%%%%%%%%%%%%%%%%%%%%%%%%%%%%%%%%%%%%%%%%%%%%%%%%%%%%%%%%%%%%%%%%%%%%%%%%%%%%%%%%%%%%%%%%%%
We first analyze the convergence of our proposed distributed training method \eqref{eq_dl_update} when the step-size is fixed, i.e., $\eta_t = \eta, \forall t \ge 0$.
\begin{theorem}\label{th_main_1}
Assume Assumption \ref{aspt_topology}, \ref{aspt_lipschitz}, \ref{aspt_unnoised} and \ref{aspt_secondmoment} hold and the fixed stepsize satisfies $\eta \le \frac{S}{\varrho}$. We have
\begin{equation}\label{eq_theorem1_1}
\left|\left|\boldsymbol{\delta}(t+1)\right|\right| \le \left|\left|\boldsymbol{\delta}(0)\right|\right|\gamma^{t+1} + \frac{\gamma\eta}{1-\gamma}
\end{equation}
\begin{align}
\frac{1}{T}\sum_{t = 0}^{T-1}\mathbb{E}\left[\left|\left|\nabla\Psi\left(\tilde{\textbf{W}}^{avr}(t)\right)\right|\right|^2\right] \le\textrm{ }& \frac{2S\left(\Psi^{(0)} - \Psi^*\right)}{\eta T}\nonumber\\ &+ \frac{M_1}{T} + M_2\eta\label{eq_theorem1_2}
\end{align}
where $\gamma = \rho\left(\textbf{P}-\frac{1}{S}\textbf{1}_S\textbf{1}_S^T\right) < 1$, $\Psi^{(0)} = \Psi\left(\tilde{\textbf{W}}^{avr}(0)\right)$, $\Psi^*$ is the optimal value of the training problem \eqref{eq_training_problem}, $M_1 = \left(24K^3+1\right)\left(S + \varrho\eta\right)\varrho^2\left(\frac{\gamma^2}{1 - \gamma^2}\left|\left|\boldsymbol{\delta}(0)\right|\right|^2 + \frac{\gamma}{1 - \gamma}\left|\left|\boldsymbol{\delta}(0)\right|\right|\eta\right)$ and $M_2 = \frac{\varrho K \sigma^2}{BS^2} + (\eta S+\varrho\eta^2)\frac{6K^3\varrho^2\sigma^2}{BS^4} + \left(24K^3+1\right)\left(S + \varrho\eta\right)\varrho^2\left(\frac{\gamma}{1 - \gamma}\right)^2\eta^2$.
\end{theorem}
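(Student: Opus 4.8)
The first bound \eqref{eq_theorem1_1} is a direct specialization of Lemma \ref{lm_consensus}: setting $\eta_\tau=\eta$ and summing the geometric series $\sum_{\tau=0}^{t}\gamma^{t+1-\tau}=\gamma\frac{1-\gamma^{t+1}}{1-\gamma}\le\frac{\gamma}{1-\gamma}$ collapses the transient and steady-state parts into the stated form, so essentially no new work is needed there. The substance of the theorem is the descent inequality \eqref{eq_theorem1_2}, and the plan is to run a ``smoothness plus averaged-iterate'' argument while carefully accounting for the two perturbations that separate this scheme from plain SGD: the consensus error $\boldsymbol{\delta}(t)$ and the gradient staleness inherent in \eqref{eq_dl_update}.

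First I would establish the dynamics of the averaged iterate $\tilde{\textbf{W}}^{avr}(t)$. Because $\textbf{P}$ is doubly stochastic, averaging the per-agent update \eqref{eq_dl_update} over the index $s$ within each model-group cancels the mixing weights and yields $\tilde{\textbf{W}}^{avr}(t+1)=\tilde{\textbf{W}}^{avr}(t)-\frac{\eta}{S}\sum_{s=1}^{S}\textbf{G}_s(t)$, where $\textbf{G}_s(t)=\textrm{col}\{\hat{\nabla}_{\hat{\textbf{w}}_{s,k}}\Phi_s(t-2K+k+1):1\le k\le K\}$ is the stacked stale stochastic gradient. Since $\Psi=\sum_s\Psi_s$ and each per-sample gradient is $\varrho$-Lipschitz by Assumption \ref{aspt_lipschitz}, the function $\Psi$ is $\varrho$-smooth, so I apply the standard descent lemma along the trajectory $\tilde{\textbf{W}}^{avr}$ and substitute this increment.

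The crux is the next step. Taking the conditional expectation over the mini-batch draw and invoking unbiasedness (Assumption \ref{aspt_unnoised}) replaces each $\hat{\nabla}_{\hat{\textbf{w}}_{s,k}}\Phi_s$ by the true local gradient $\nabla_{\hat{\textbf{w}}_{s,k}}\Psi_s$ evaluated at the stale local point $\tilde{\textbf{W}}_s(t-2K+k+1)$. The goal is to expose $-\|\nabla\Psi(\tilde{\textbf{W}}^{avr}(t))\|^2$, and the obstruction is the mismatch between this stale local point and the current averaged point. I would bound that mismatch by $\varrho\|\tilde{\textbf{W}}_s(\text{stale})-\tilde{\textbf{W}}^{avr}(t)\|$ via Assumption \ref{aspt_lipschitz}, then split the distance into a consensus part controlled by $\|\boldsymbol{\delta}\|$ and a staleness/drift part equal to the accumulated weight change over the delay window of length $O(K)$, each single-step increment controlled by the uniform bound of Assumption \ref{aspt_secondmoment}. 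Young's inequality on the resulting cross term produces $-\frac{\eta}{2S}\|\nabla\Psi(\tilde{\textbf{W}}^{avr}(t))\|^2$ together with these error terms, while the quadratic movement term $\frac{\varrho\eta^2}{2S^2}\mathbb{E}\|\sum_s\textbf{G}_s(t)\|^2$ is bounded using the second moment (with the mini-batch average supplying the $1/B$ factor, hence the $\frac{\varrho K\sigma^2}{BS^2}$ contribution). The restriction $\eta\le S/\varrho$ is precisely what keeps the net coefficient of $\|\nabla\Psi(\tilde{\textbf{W}}^{avr}(t))\|^2$ nonnegative so it can be moved to the left-hand side.

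Finally, I would rearrange, telescope the loss differences over $t=0,\dots,T-1$ down to $\Psi^{(0)}-\Psi^*$, and insert the summed consensus-error bound obtained by squaring \eqref{eq_theorem1_1}: the sum $\sum_t\|\boldsymbol{\delta}(t)\|^2$ separates into a transient geometric part that is a $T$-independent constant (collected into the $\frac{M_1}{T}$ term, which is why $M_1$ carries the $\frac{\gamma^2}{1-\gamma^2}\|\boldsymbol{\delta}(0)\|^2$ and $\frac{\gamma}{1-\gamma}\|\boldsymbol{\delta}(0)\|\eta$ factors) and a steady-state part of order $T(\frac{\gamma\eta}{1-\gamma})^2$ that survives per step and feeds into $M_2\eta$. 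Dividing by $T$ and scaling by $2S/\eta$ recovers the stated form. I expect the dominant difficulty to be the bookkeeping in the third step: quantifying the combined staleness-plus-consensus drift across the $K$ model-groups and the $O(K)$-step delay horizon, where the nested summation over layers and over the delay window is what generates the $K^3$ factors appearing in $M_1$ and $M_2$.
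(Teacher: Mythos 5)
Your proposal follows essentially the same route as the paper's proof: specializing Lemma \ref{lm_consensus} to a constant step size for \eqref{eq_theorem1_1}, then applying the $\varrho$-smoothness descent lemma to the averaged iterate (whose dynamics follow from the double stochasticity of $\textbf{P}$), using unbiasedness and Young's inequality with $\eta\le S/\varrho$ to extract $-\frac{\eta}{2S}\|\nabla\Psi(\tilde{\textbf{W}}^{avr}(t))\|^2$, splitting the gradient mismatch into a consensus part and a staleness/drift part over the $O(K)$ delay window (which is indeed where the $K^3$ factors arise), and finally telescoping while separating $\sum_t\|\boldsymbol{\delta}(t)\|^2$ into a transient constant (absorbed into $M_1/T$) and a per-step residual (absorbed into $M_2\eta$). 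The plan is correct and matches the paper's argument in both structure and key estimates.
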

\begin{proof}
See Appendix-B.
\end{proof}
Theorem \ref{th_main_1} shows that both the norm of the consensus error $\boldsymbol{\delta}(t)$ and the average norm of the gradients of total cost functions over $T$ iterations are upper bound by quantities depending on the step-size $\eta$. We can reduce the step-size to obtain the solution more closer to the critical points. Nevertheless, the drawback of this selection is the decrease of the convergence speed.
%%%%%%%%%%%%%%%%%%%%%%%%%%%%%%%%%%%%%%%%%%%%%%%%%%%%%%%%%%%%%%%%%%%%%%%%%%%%%%%%%%%%%%%%%%
\subsection{Diminishing step-size}
%%%%%%%%%%%%%%%%%%%%%%%%%%%%%%%%%%%%%%%%%%%%%%%%%%%%%%%%%%%%%%%%%%%%%%%%%%%%%%%%%%%%%%%%%%
To guarantee surely the convergence to critical points, our proposed training method requires diminishing step-sizes, which satisfy the following assumption.
\begin{assumption}\label{aspt_dimish}
Step-sizes have the following properties:
\[\eta_t\frac{\varrho}{S} \le 1, \eta_t > \eta_{t+1}, \forall t \ge 0\]
\[\lim\limits_{T \rightarrow \infty}\sum_{t = 0}^{T-1}\eta_t = \infty\]
\[\lim\limits_{T \rightarrow \infty}\sum_{t = 0}^{T-1}\eta_t^2 = M_0 < \infty\]
\end{assumption}
It can be verified that $\eta_t = \frac{\eta^*}{t+1}$, where $\eta^* \le \frac{S}{\varrho}$, is an example satisfying Assumption \ref{aspt_dimish}.
\begin{theorem}\label{th_main_2}
Assume Assumption \ref{aspt_topology}, \ref{aspt_lipschitz}, \ref{aspt_unnoised}, \ref{aspt_secondmoment} and \ref{aspt_dimish} hold. We have
\begin{equation}\label{eq_theorem2_1}
\lim\limits_{t \rightarrow \infty}\left|\left|\boldsymbol{\delta}(t)\right|\right| = 0
\end{equation}
Let $\tilde{\textbf{W}}^{avr}(\tau)$ be chosen randomly from $\{\tilde{\textbf{W}}^{avr}(t): 0 \le t \le T-1\}$ with probabilities proportional to $\{\eta_0, \eta_1, \dots, \eta_{T-1}\}$. Then, we can obtain
\begin{equation}\label{eq_theorem2_2}
\lim\limits_{\tau \rightarrow \infty}\mathbb{E}\left[\nabla\Psi\left(\tilde{\textbf{W}}^{avr}(\tau)\right)\right] = 0
\end{equation}
\end{theorem}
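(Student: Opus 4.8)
The plan is to treat the two claims separately: the consensus limit \eqref{eq_theorem2_1} follows almost directly from Lemma \ref{lm_consensus}, while the gradient limit \eqref{eq_theorem2_2} is obtained from a \emph{summable} version of the descent estimate underlying Theorem \ref{th_main_1}. For \eqref{eq_theorem2_1}, I would start from the bound of Lemma \ref{lm_consensus}. The first term $\gamma^{t+1}\|\boldsymbol{\delta}(0)\|$ vanishes since $\gamma < 1$. For the convolution term $\sigma\sqrt{K/(BS)}\sum_{\tau=0}^{t}\gamma^{t+1-\tau}\eta_\tau$, I would first observe that Assumption \ref{aspt_dimish} forces $\eta_t \to 0$ (a consequence of $\sum_t \eta_t^2 = M_0 < \infty$), and then show that convolving a vanishing sequence with the summable geometric kernel $\{\gamma^{j+1}\}$ again yields a vanishing sequence: given $\epsilon > 0$, split the sum into a finite head of fixed length $J$, each term of which tends to $0$ as $t \to \infty$, and a geometric tail bounded by $\eta_0\sum_{j>J}\gamma^{j+1}$, made arbitrarily small by choosing $J$ (using $\eta_\tau \le \eta_0$). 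This gives $\|\boldsymbol{\delta}(t)\| \to 0$.

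For \eqref{eq_theorem2_2}, the core is a one-step descent inequality for the averaged iterate $\tilde{\textbf{W}}^{avr}(t)$. Because $\textbf{P}$ is doubly stochastic, the consensus step preserves the model-group average, so $\tilde{\textbf{W}}^{avr}$ evolves as a perturbed gradient step driven by the averaged stochastic gradients $\frac{1}{S}\sum_s \hat{\nabla}_{\hat{\textbf{w}}_{s,k}}\Phi_s$. Using the $\varrho$-Lipschitz smoothness of Assumption \ref{aspt_lipschitz}, the unbiasedness of Assumption \ref{aspt_unnoised}, and the bounded second moment of Assumption \ref{aspt_secondmoment}, I would bound $\mathbb{E}[\Psi(\tilde{\textbf{W}}^{avr}(t+1))]$ by $\mathbb{E}[\Psi(\tilde{\textbf{W}}^{avr}(t))] - \tfrac{\eta_t}{2S}\mathbb{E}\|\nabla\Psi(\tilde{\textbf{W}}^{avr}(t))\|^2$ plus error terms. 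This is the per-iteration estimate already assembled in the proof of Theorem \ref{th_main_1} (Appendix-B); the two error sources are the consensus error $\boldsymbol{\delta}(t)$ (gradients are evaluated at per-agent weights rather than at the average) and the gradient staleness (gradients lag by up to $2K$ iterations, so the drift of the average weight over that window, each step of size $O(\eta_t\sigma)$, must be controlled via Assumption \ref{aspt_secondmoment}). The resulting error terms are of order $O(\eta_t^2)$ together with $O(\eta_t\|\boldsymbol{\delta}(t)\|^2)$ and $O(\eta_t^2\|\boldsymbol{\delta}(t)\|)$.

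I would then sum the descent inequality over $t = 0, \dots, T-1$ and telescope the $\Psi$ terms against $\Psi^{(0)} - \Psi^*$. The new ingredient relative to Theorem \ref{th_main_1} is establishing that the error terms are summable rather than merely $O(1/T)$. By Assumption \ref{aspt_dimish}, $\sum_t \eta_t^2 = M_0 < \infty$; applying the discrete Young convolution inequality to the bound of Lemma \ref{lm_consensus} gives $\sum_{t}\|\boldsymbol{\delta}(t)\|^2 \le 2\|\boldsymbol{\delta}(0)\|^2\sum_t \gamma^{2t} + \tfrac{2\sigma^2 K}{BS(1-\gamma)^2}\sum_\tau \eta_\tau^2 < \infty$, and since $\eta_t \le \eta_0$ this also makes $\sum_t \eta_t\|\boldsymbol{\delta}(t)\|^2$ and $\sum_t \eta_t^2\|\boldsymbol{\delta}(t)\|$ finite. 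Hence $\sum_{t=0}^{\infty}\eta_t\,\mathbb{E}\|\nabla\Psi(\tilde{\textbf{W}}^{avr}(t))\|^2 < \infty$. Because $\sum_t \eta_t = \infty$, dividing the partial sums by $\sum_{t=0}^{T-1}\eta_t$ and letting $T \to \infty$ drives the weighted average of $\mathbb{E}\|\nabla\Psi\|^2$ to zero; this weighted average is exactly $\mathbb{E}\|\nabla\Psi(\tilde{\textbf{W}}^{avr}(\tau))\|^2$ for the randomly sampled index $\tau$, and Jensen's inequality $\|\mathbb{E}[\nabla\Psi]\| \le (\mathbb{E}\|\nabla\Psi\|^2)^{1/2}$ converts this into \eqref{eq_theorem2_2}.

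The main obstacle is the descent inequality itself, specifically the simultaneous bookkeeping of staleness and consensus error. Controlling the staleness requires summing the per-step drift of the averaged weight across the $2K$-iteration delay and feeding it, through the Lipschitz constant $\varrho$, into the gap between $\nabla\Psi(\tilde{\textbf{W}}^{avr}(t))$ and the stale per-agent gradients actually used; keeping every such contribution at order $\eta_t^2$ or $\eta_t\|\boldsymbol{\delta}(t)\|^2$ is precisely what guarantees summability. Once that estimate is in hand, the conclusion follows from the standard Robbins--Monro-type argument combining $\sum_t \eta_t = \infty$ with the finiteness of $\sum_t \eta_t\mathbb{E}\|\nabla\Psi\|^2$.
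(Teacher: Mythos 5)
Your proposal is correct and follows essentially the same route as the paper's Appendix-C proof: part one is Lemma \ref{lm_consensus} plus a head--tail split of the geometric convolution (the paper does this via a recursion for $\Lambda(t)=\sum_{\tau\le t}\gamma^{t+1-\tau}\eta_\tau$, you via $\eta_t\to 0$; both are equivalent), and part two reuses the descent inequality from Theorem \ref{th_main_1}, establishes summability of the consensus and staleness error terms under $\sum_t\eta_t^2<\infty$, and concludes with the Robbins--Monro argument and the weighted random-index sampling. The only cosmetic difference is that you keep the error terms quadratic in $\|\boldsymbol{\delta}(t)\|$ and invoke Young's convolution inequality to get $\sum_t\|\boldsymbol{\delta}(t)\|^2<\infty$, whereas the paper linearizes one factor using Assumption \ref{aspt_secondmoment} and bounds $\sum_t\|\boldsymbol{\delta}(t)\|\eta_t$ instead; both yield the needed summability.
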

\begin{proof}
See Appendix-C.
\end{proof}
%%%%%%%%%%%%%%%%%%%%%%%%%%%%%%%%%%%%%%%%%%%%%%%%%%%%%%%%%%%%%%%%%%%%%%%%%%%%%%%%%%%%%%%%%%
\begin{figure*}[htb]
\begin{center}
\includegraphics[width=0.65\columnwidth]{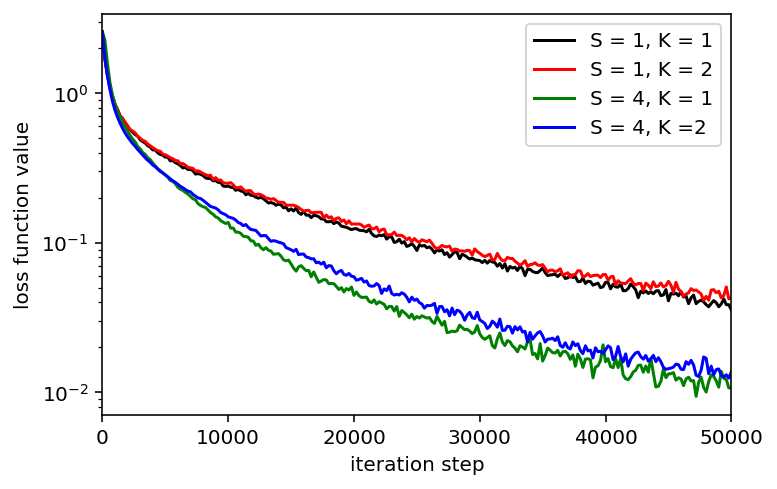}
\includegraphics[width=0.65\columnwidth]{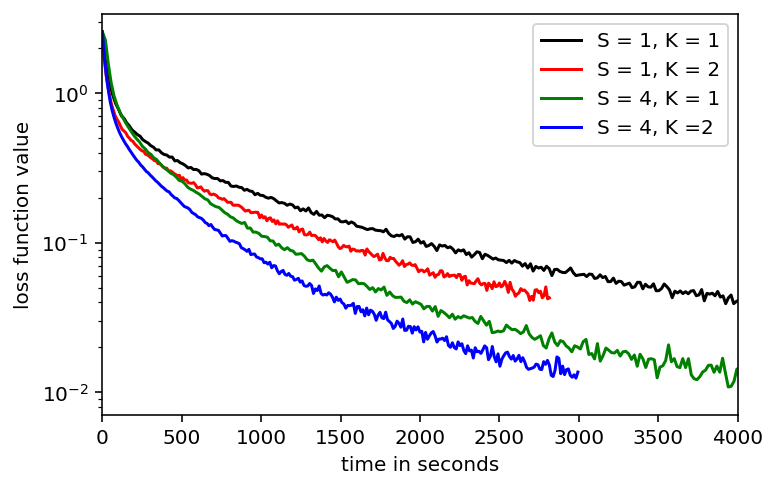}
\includegraphics[width=0.65\columnwidth]{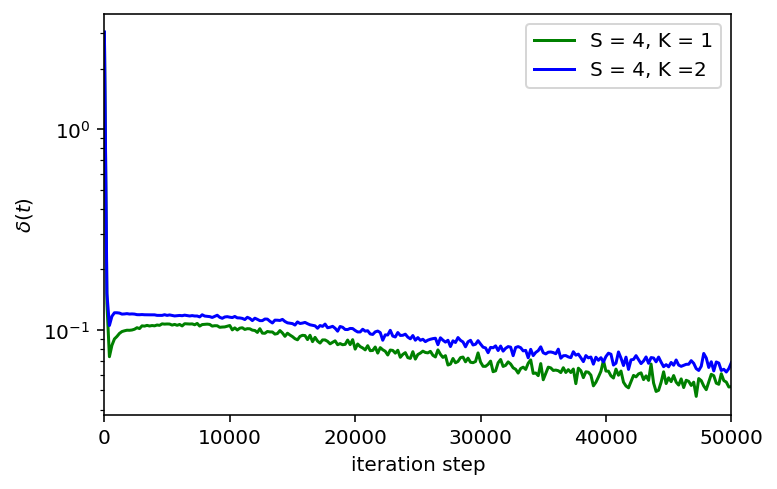}
\caption{Training performance results under Strategy I.}
\label{fig_result_1}
\end{center}
\end{figure*}
\begin{figure*}[htb]
\begin{center}
\includegraphics[width=0.65\columnwidth]{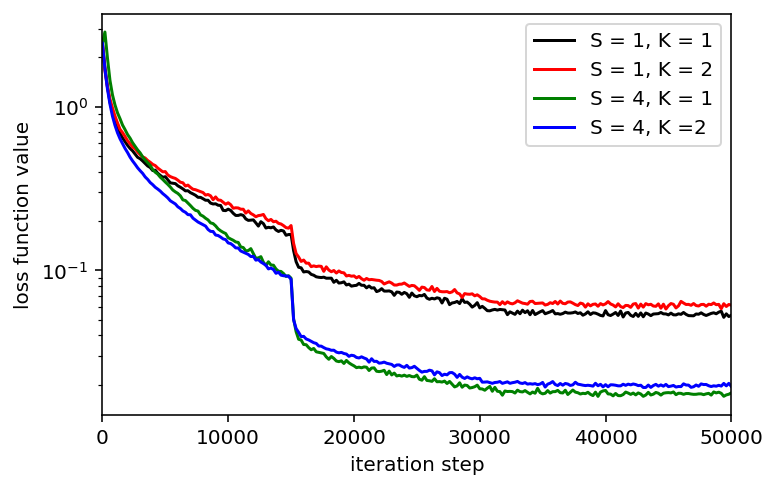}
\includegraphics[width=0.65\columnwidth]{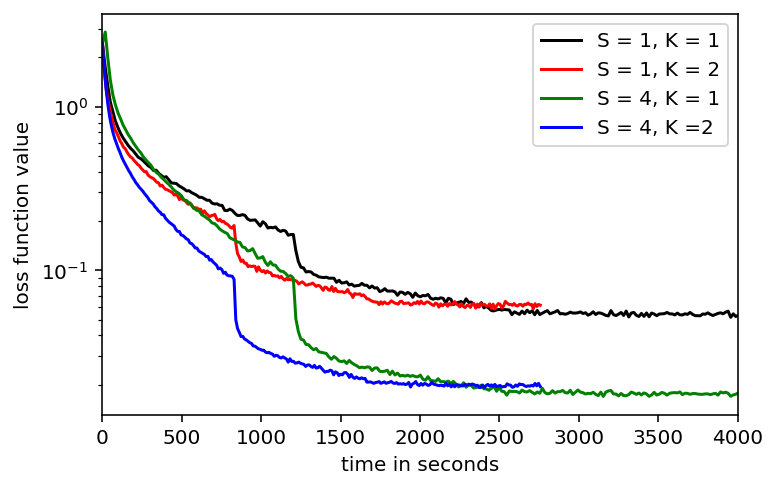}
\includegraphics[width=0.65\columnwidth]{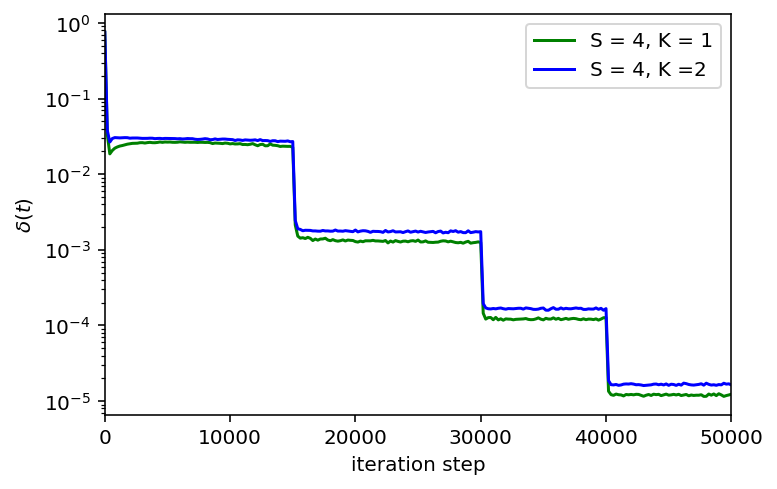}
\caption{Training performance results under Strategy II.}
\label{fig_result_2}
\end{center}
\end{figure*}
%%%%%%%%%%%%%%%%%%%%%%%%%%%%%%%%%%%%%%%%%%%%%%%%%%%%%%%%%%%%%%%%%%%%%%%%%%%%%%%%%%%%%%%%%%
\section{Experiments}\label{sec_exprm}
%%%%%%%%%%%%%%%%%%%%%%%%%%%%%%%%%%%%%%%%%%%%%%%%%%%%%%%%%%%%%%%%%%%%%%%%%%%%%%%%%%%%%%%%%%
To evaluate the effectiveness of the proposed distributed training method, we conduct experiments on training a neural network (ResNet-20) for classification tasks. These experiments were performed on a GPU (NVIDIA GeForce GTX 1060 3Gbs) with the CIFAR-10 training dataset, which consists of $50,000$ samples of $32 \times 32$ color images spanning ten classes.
We compare the training performance across the following four training methods:
\begin{enumerate}
\item Centralized method ($S = 1, K = 1$): This represents the stochastic gradient method \eqref{eq_SGD} using the traditional backpropagation algorithm. The training dataset is centralized on a single computer.
\item Decoupled model method ($S = 1, K = 2$): This method employs the fully decoupled parallel backpropagation algorithm with the training dataset centralized on a single computer. The network is split into two submodels, and the computations for each submodel are assigned to a separate agent.
\item Data parallel method ($S = 4, K = 1$): The update law is defined by the equation \eqref{eq_consensus_update}, where gradients are computed using the traditional backpropagation algorithm. The training dataset is distributed among four agents.
\item Distributed training method ($S = 4, K = 2$): This is our proposed training method. The network is divided into two-model groups, and the training dataset is distributed among four data groups.
\end{enumerate}
For each method, we perform $50,000$ iteration steps. In each step, the mini-batch size is set to $194$.
The training process is conducted under two strategies of step-size selections:
\begin{align}
\textrm{\textbf{Strategy I: }}& \eta_t = 0.1, \forall t \ge 0.\label{eq_stepsize_st1}\\
\textrm{\textbf{Strategy II: }}& \eta_t = \left\{\begin{matrix}
0.1 & \textrm{if } t \le 15000\\
0.01 & \textrm{if } 15000 < t \le 30000\\
0.001 & \textrm{if } 30000 < t \le 40000\\
0.0001 & \textrm{if } t > 40000.
\end{matrix}\right..\label{eq_stepsize_st2}
\end{align}

Fig. \ref{fig_result_1} shows the training performances under constant step-size \eqref{eq_stepsize_st1} and Fig. \ref{fig_result_2} corresponds to the Strategy II (i.e., the equation \eqref{eq_stepsize_st2}).
In these figures, the black, red, green and blue curves represent the performance results of the centralized method, decoupled model method, data parallel method and distributed training method, respectively.
In Fig. \ref{fig_result_1} and Fig. \ref{fig_result_2}, figures in their first columns describe the evolution of the loss function values over training iteration steps. Due to using stale gradient, the performance result of the decoupled model method is little worse than the centralized method. This drawback is solved by increasing the amount of training data processed in each iteration as the approach of the distributed training method. Though the data parallel method has better result than the distributed training method when using the same number of iterations, the training time is substantially longer. To process one mini-batch training data, the methods using traditional backpropagation algorithm need $85$ ms while the ones using fully decoupled parallel backpropagation algorithm need $58$ ms. Figures in second columns (of Fig. \ref{fig_result_1} and Fig. \ref{fig_result_2}) describe the evolution of the loss function values over training time. The distributed training method provides the best performance result in four compared training methods with the same training time.

Let $\textbf{w}_{s,l}(t)$ be the vectorization of the weight in the layer $l, \forall 1 \le l \le L$, corresponding to the vector $\hat{\textbf{W}}_s(t)$. For the data parallel method and the distributed training method, we measure the disagreement of the estimated weight vectors $\hat{\textbf{W}}_s(t), \forall 1 \le s \le S$, with their average by the following equation:
\begin{equation}\label{eq_max_consensus_error}
\delta(t) = \max\limits_{\substack{1 \le l \le L,\\ 1 \le s \le S}}\left|\left|\textbf{w}_{s,l}(t) - \frac{1}{S}\sum\limits_{s = 1}^{S}\textbf{w}_{s,l}(t)\right|\right|.
\end{equation}
Figures in third columns (of Fig. \ref{fig_result_1} and Fig. \ref{fig_result_2}) represent the evolution of $\delta(t)$ (corresponding to the data parallel method and the distributed training method) regarding training iterations. We can observe that, in both methods, $\delta(t)$ reduce significantly fast to values less than the chosen step-sizes.
%%%%%%%%%%%%%%%%%%%%%%%%%%%%%%%%%%%%%%%%%%%%%%%%%%%%%%%%%%%%%%%%%%%%%%%%%%%%%%%%%%%%%%%%%%
%%%%%%%%%%%%%%%%%%%%%%%%%%%%%%%%%%%%%%%%%%%%%%%%%%%%%%%%%%%%%%%%%%%%%%%%%%%%%%%%%%%%%%%%%%
\section{Conclusion}\label{sec_conclude}
%%%%%%%%%%%%%%%%%%%%%%%%%%%%%%%%%%%%%%%%%%%%%%%%%%%%%%%%%%%%%%%%%%%%%%%%%%%%%%%%%%%%%%%%%%
This paper introduces a distributed method for accelerating the training of deep neural networks by integrating data parallelism with the fully decoupled parallel backpropagation algorithm. The proposed approach is fully decentralized, as it eliminates the need for central coordination, relying solely on local communications between neighboring agents. The effectiveness of the method is established both theoretically and empirically. Under commonly encountered conditions in deep learning, the method is proven to achieve critical points at a sublinear convergence rate. Experimental results on classification tasks further demonstrate that the proposed method significantly accelerates the training process without compromising accuracy.
%%%%%%%%%%%%%%%%%%%%%%%%%%%%%%%%%%%%%%%%%%%%%%%%%%%%%%%%%%%%%%%%%%%%%%%%%%%%%%%%%%%%%%%%%%
\bibliography{mylib}
\bibliographystyle{icml2025}
%%%%%%%%%%%%%%%%%%%%%%%%%%%%%%%%%%%%%%%%%%%%%%%%%%%%%%%%%%%%%%%%%%%%%%%%%%%%%%%
%%%%%%%%%%%%%%%%%%%%%%%%%%%%%%%%%%%%%%%%%%%%%%%%%%%%%%%%%%%%%%%%%%%%%%%%%%%%%%%
% APPENDIX
%%%%%%%%%%%%%%%%%%%%%%%%%%%%%%%%%%%%%%%%%%%%%%%%%%%%%%%%%%%%%%%%%%%%%%%%%%%%%%%
%%%%%%%%%%%%%%%%%%%%%%%%%%%%%%%%%%%%%%%%%%%%%%%%%%%%%%%%%%%%%%%%%%%%%%%%%%%%%%%
\newpage
\appendix
\onecolumn
\allowdisplaybreaks
%%%%%%%%%%%%%%%%%%%%%%%%%%%%%%%%%%%%%%%%%%%%%%%%%%%%%%%%%%%%%%%%%%%%%%%%%%%%%%%%%%%%%%%%%%
\section{Proof of Lemma \ref{lm_consensus}}
%%%%%%%%%%%%%%%%%%%%%%%%%%%%%%%%%%%%%%%%%%%%%%%%%%%%%%%%%%%%%%%%%%%%%%%%%%%%%%%%%%%%%%%%%%
Define $\hat{\nabla}\Upsilon(t)$ as the stacked vector of all gradients used in iteration $t \ge 0$. That means
\begin{align*}
\hat{\nabla}\Upsilon(t) = \left[\begin{matrix} \hat{\nabla}\Phi_1(t) \\ \hat{\nabla}\Phi_2(t)\\ \vdots\\ \hat{\nabla}\Phi_S(t)\end{matrix}\right] \textrm{ where }
\hat{\nabla}\Phi_s(t) = \left[\begin{matrix} \hat{\nabla}_{\hat{\textbf{w}}_{s,1}}\Phi_s(t-2K+2) \\ \hat{\nabla}_{\hat{\textbf{w}}_{s,2}}\Phi_s(t-2K+3)\\ \vdots\\ \hat{\nabla}_{\hat{\textbf{w}}_{s,K}}\Phi_s(t-K+1)\end{matrix}\right]
\end{align*}
From the definition, we have $\left|\left|\hat{\nabla}_{\hat{\textbf{w}}_{s,k}}\Phi_s(\tau)\right|\right|^2 \le \frac{\sigma^2}{BS^2}$ for all $s, k$ and $\tau \ge 0$. So, for all $\tau \ge 0$, it is guaranteed that
\begin{equation}
\left|\left|\hat{\nabla}\Phi_s(\tau)\right|\right|^2 \le \frac{K\sigma^2}{BS^2}, \forall 1 \le s \le S, \textrm{ and } \left|\left|\hat{\nabla}\Upsilon(\tau)\right|\right|^2 \le \frac{K\sigma^2}{BS}.
\end{equation}

According to the update \eqref{eq_dl_update}, we have
\begin{equation}
\hat{\textbf{W}}(t+1) = \left(\textbf{P} \otimes \textbf{I}_d\right)\left(\hat{\textbf{W}}(t) - \eta_t\hat{\nabla}\Upsilon(t)\right)
\end{equation}

Consider the consensus error $\boldsymbol{\delta}(t+1)$, we have
\begin{align}
\boldsymbol{\delta}(t+1) &= \hat{\textbf{W}}(t+1) - \left(\textbf{1}_S \otimes \textbf{I}_d\right) \tilde{\textbf{W}}^{avr}(t+1)\nonumber\\
&= \left(\left(\textbf{I}_S - \frac{1}{S}\textbf{1}_S\textbf{1}_S^T\right) \otimes \textbf{I}_S\right)\hat{\textbf{W}}(t+1)\nonumber\\
&= \left(\left(\textbf{I}_S - \frac{1}{S}\textbf{1}_S\textbf{1}_S^T\right) \otimes \textbf{I}_d\right) \left(\textbf{P} \otimes \textbf{I}_d\right) \left(\hat{\textbf{W}}(t) - \eta_t\hat{\nabla}\Upsilon(t)\right)\label{eq_matrixcomp_0}
\end{align}
Consider that
\begin{align}
\left(\left(\textbf{I}_S - \frac{1}{S}\textbf{1}_S\textbf{1}_S^T\right) \otimes \textbf{I}_d\right) \left(\textbf{P} \otimes \textbf{I}_d\right) &= \left(\left(\textbf{P} - \frac{1}{S}\textbf{1}_S\textbf{1}_S^T\textbf{P}\right) \otimes \left(\textbf{I}_d\textbf{I}_d\right)\right)\nonumber\\
&= \left(\left(\textbf{P} - \frac{1}{S}\textbf{1}_S\textbf{1}_S^T\right) \otimes \textbf{I}_d\right)\label{eq_matrixcomp_1}
\end{align}
In addition, we have
\begin{align}
\left(\left(\textbf{P} - \frac{1}{S}\textbf{1}_S\textbf{1}_S^T\right) \otimes \textbf{I}_d\right)\left(\left(\textbf{I}_S - \frac{1}{S}\textbf{1}_S\textbf{1}_S^T\right) \otimes \textbf{I}_d\right) &= \left(\left(\textbf{P} - \frac{1}{S}\textbf{1}_S\textbf{1}_S^T\right)\left(\textbf{I}_S - \frac{1}{S}\textbf{1}_S\textbf{1}_S^T\right)\right) \otimes \textbf{I}_d\nonumber\\
&= \left(\textbf{P} - \frac{1}{S}\textbf{1}_S\textbf{1}_S^T  - \frac{1}{S}\textbf{P}\textbf{1}_S\textbf{1}_S^T + \frac{1}{S^2}\textbf{1}_S\textbf{1}_S^T\textbf{1}_S\textbf{1}_S^T\right) \otimes \textbf{I}_d\nonumber\\
&= \left(\textbf{P} - \frac{2}{S}\textbf{1}_S\textbf{1}_S^T + \frac{1}{S^2}\textbf{1}_SS\textbf{1}_S^T\right) \otimes \textbf{I}_d\nonumber\\
&= \left(\textbf{P} - \frac{1}{S}\textbf{1}_S\textbf{1}_S^T\right) \otimes \textbf{I}_d\label{eq_matrixcomp_2}
\end{align}
Define $\boldsymbol{\Gamma} = \left(\textbf{P} - \frac{1}{S}\textbf{1}_S\textbf{1}_S^T\right) \otimes \textbf{I}_d$. From \eqref{eq_matrixcomp_1}, \eqref{eq_matrixcomp_2} and \eqref{eq_matrixcomp_0}, we have
\begin{align*}
\boldsymbol{\delta}(t+1) &= \boldsymbol{\Gamma}\boldsymbol{\delta}(t) + \eta_t\boldsymbol{\Gamma}\hat{\nabla}\Upsilon(t)\\
&= \boldsymbol{\Gamma}^2\boldsymbol{\delta}(t-1) + \eta_{t-1}\boldsymbol{\Gamma}^2\hat{\nabla}\Upsilon(t-1) + \eta_t\boldsymbol{\Gamma}\hat{\nabla}\Upsilon(t)\\
&= \cdots\\
&= \boldsymbol{\Gamma}^{t+1}\boldsymbol{\delta}(0) + \sum\limits_{\tau = 0}^{t}\eta_{\tau}\boldsymbol{\Gamma}^{t+1-\tau}\hat{\nabla}\Upsilon(\tau)
\end{align*}
Taking the norm of the above equation, we obtain
\begin{align}
\left|\left|\boldsymbol{\delta}(t+1)\right|\right| &\le \left|\left|\boldsymbol{\Gamma}^{t+1}\boldsymbol{\delta}(0)\right|\right| + \left|\left|\sum\limits_{\tau = 0}^{t}\eta_{\tau}\boldsymbol{\Gamma}^{t+1-\tau}\hat{\nabla}\Upsilon(\tau)\right|\right| \le \gamma^{t+1}\left|\left|\boldsymbol{\delta}(0)\right|\right| + \sigma\sqrt{\frac{K}{BS}}\sum\limits_{\tau = 0}^{t}\gamma^{t+1-\tau}\eta_{\tau}
\end{align}
because $\left|\left|\hat{\nabla}\Upsilon(\tau)\right|\right| \le \sigma\sqrt{\frac{K}{BS}}$ for all $\tau \ge 0$.
%%%%%%%%%%%%%%%%%%%%%%%%%%%%%%%%%%%%%%%%%%%%%%%%%%%%%%%%%%%%%%%%%%%%%%%%%%%%%%%%%%%%%%%%%%
%%%%%%%%%%%%%%%%%%%%%%%%%%%%%%%%%%%%%%%%%%%%%%%%%%%%%%%%%%%%%%%%%%%%%%%%%%%%%%%%%%%%%%%%%%
\section{Proof of Theorem \ref{th_main_1}}
%%%%%%%%%%%%%%%%%%%%%%%%%%%%%%%%%%%%%%%%%%%%%%%%%%%%%%%%%%%%%%%%%%%%%%%%%%%%%%%%%%%%%%%%%%
Define $\Lambda(t) = \sum\limits_{\tau = 0}^{t}\gamma^{t+1-\tau}\eta_{\tau}$.

For $\eta(t) = \eta, \forall t \ge 0$, we have 
\[\Lambda(t) = \eta\sum\limits_{\tau = 0}^{t}\gamma^{t+1-\tau} \le \eta\sum\limits_{\tau = 1}^{\infty}\gamma^{\tau}.\]
It is well-known that $\sum\limits_{\tau = 1}^{\infty}\gamma^{\tau} = \frac{\gamma}{1 - \gamma}$.
Thus we have
\begin{align}
\left|\left|\textbf{W}(t+1) - \left(\textbf{1}_S \otimes \textbf{I}_d\right) \tilde{\textbf{W}}^{avr}(t+1)\right|\right| \le \gamma^{t+1}\left|\left|\boldsymbol{\delta}(0)\right|\right| + \frac{\gamma}{1 - \gamma}\eta.
\end{align}

Since the loss function $\phi(\cdot)$ is $\varrho$ Lipschitz continuous, we have $\Psi(\overline{\textbf{W}})$ is $\varrho$ Lipschitz continuous. 
This implies
\begin{align*}
\Psi(\tilde{\textbf{W}}^{avr}(t+1)) \le\textrm{ }& \Psi(\tilde{\textbf{W}}^{avr}(t)) + \left(\nabla\Psi(\tilde{\textbf{W}}^{avr}(t))\right)^T\left(\tilde{\textbf{W}}^{avr}(t+1) - \tilde{\textbf{W}}^{avr}(t)\right) + \frac{\varrho}{2}\left|\left|\tilde{\textbf{W}}^{avr}(t+1) - \tilde{\textbf{W}}^{avr}(t)\right|\right|^2\\
=\textrm{ }& \Psi(\tilde{\textbf{W}}^{avr}(t)) - \tilde{\eta}_t\left(\nabla\Psi(\tilde{\textbf{W}}^{avr}(t))\right)^T\left(\sum_{s = 1}^{S}\hat{\nabla}\Phi_s(t)\right) + \frac{\varrho\tilde{\eta}_t^2}{2}\left|\left|\sum_{s = 1}^{S}\hat{\nabla}\Phi_s(t)\right|\right|^2,
\end{align*}
where $\tilde{\eta}_t = \frac{\eta_t}{S}$.
Taking expectation for both sides of the above equation, we obtain
\begin{align}
\mathbb{E}\left[\Psi(\tilde{\textbf{W}}^{avr}(t+1))\right] \le \mathbb{E}\left[\Psi(\tilde{\textbf{W}}^{avr}(t))\right] - \tilde{\eta}_t\left(\nabla\Psi(\tilde{\textbf{W}}^{avr}(t))\right)^T\mathbb{E}\left[\sum_{s = 1}^{S}\hat{\nabla}\Phi_s(t)\right] + \frac{\varrho\tilde{\eta}_t^2}{2}\mathbb{E}\left[\left|\left|\sum_{s = 1}^{S}\hat{\nabla}\Phi_s(t)\right|\right|^2\right]\label{eq_proofconv_temp1}
\end{align}
Define $\hat{\nabla}\Psi_s(t) = \left[\begin{matrix} \nabla_{\hat{\textbf{w}}_{s,1}}\Psi_s(\hat{\textbf{W}}_1(t-2K+2)) \\ \nabla_{\hat{\textbf{w}}_{s,2}}\Psi_s(\hat{\textbf{W}}_2(t-2K+4))\\ \vdots\\ \nabla_{\hat{\textbf{w}}_{s,K}}\Psi_s(\hat{\textbf{W}}_K(t))\end{matrix}\right]$.
According to Assumption \ref{aspt_unnoised}, we have $\mathbb{E}\left[\hat{\nabla}\Phi_s(t)\right] = \hat{\nabla}\Psi_s(t)$.
The equation \eqref{eq_proofconv_temp1} becomes
\begin{align*}
\mathbb{E}\left[\Psi(\tilde{\textbf{W}}^{avr}(t+1))\right] \le\textrm{ }& \mathbb{E}\left[\Psi(\tilde{\textbf{W}}^{avr}(t))\right] - \tilde{\eta}_t\left(\nabla\Psi(\tilde{\textbf{W}}^{avr}(t))\right)^T\left(\sum_{s = 1}^{S}\hat{\nabla}\Psi_s(t)\right) + \frac{\varrho\tilde{\eta}_t^2}{2}\mathbb{E}\left[\left|\left|\sum_{s = 1}^{S}\hat{\nabla}\Phi_s(t)\right|\right|^2\right]\\
=\textrm{ }& \mathbb{E}\left[\Psi(\tilde{\textbf{W}}^{avr}(t))\right] - \tilde{\eta}_t\left(\nabla\Psi(\tilde{\textbf{W}}^{avr}(t))\right)^T\left(\sum_{s = 1}^{S}\hat{\nabla}\Psi_s(t) - \nabla\Psi(\tilde{\textbf{W}}^{avr}(t)) + \nabla\Psi(\tilde{\textbf{W}}^{avr}(t))\right)\\ & + \frac{\varrho\tilde{\eta}_t^2}{2}\mathbb{E}\left[\left|\left|\sum_{s = 1}^{S}\hat{\nabla}\Phi_s(t) - \nabla\Psi(\tilde{\textbf{W}}^{avr}(t)) + \nabla\Psi(\tilde{\textbf{W}}^{avr}(t))\right|\right|^2\right]\\
=\textrm{ }& \mathbb{E}\left[\Psi(\tilde{\textbf{W}}^{avr}(t))\right] - \left(\tilde{\eta}_t - \frac{\varrho\tilde{\eta}_t^2}{2}\right)\left|\left|\nabla\Psi(\tilde{\textbf{W}}^{avr}(t))\right|\right|^2 + \frac{\varrho\tilde{\eta}_t^2}{2}\left|\left|\sum_{s = 1}^{S}\hat{\nabla}\Phi_s(t) - \nabla\Psi(\tilde{\textbf{W}}^{avr}(t))\right|\right|^2\\ &+ \left(\varrho\tilde{\eta}_t^2 - \tilde{\eta}_t\right)\left(\nabla\Psi(\tilde{\textbf{W}}^{avr}(t))\right)^T\left(\sum_{s = 1}^{S}\hat{\nabla}\Psi_s(t) - \nabla\Psi(\tilde{\textbf{W}}^{avr}(t))\right)
\end{align*}
By choosing $\eta_t \le \frac{S}{\varrho}$, we have $\varrho\tilde{\eta}_t^2 - \tilde{\eta}_t < 0$.
In addition, $\pm\left(\nabla\Psi(\tilde{\textbf{W}}^{avr}(t))\right)^T\left(\sum_{s = 1}^{S}\hat{\nabla}\Psi_s(t) - \nabla\Psi(\tilde{\textbf{W}}^{avr}(t))\right) \le \frac{1}{2}\left|\left|\nabla\Psi(\tilde{\textbf{W}}^{avr}(t))\right|\right|^2 + \frac{1}{2}\left|\left|\sum_{s = 1}^{S}\hat{\nabla}\Psi_s(t) - \nabla\Psi(\tilde{\textbf{W}}^{avr}(t))\right|\right|^2$. So, we have
\begin{align}
\mathbb{E}\left[\Psi(\tilde{\textbf{W}}^{avr}(t+1))\right] \le\textrm{ }& \mathbb{E}\left[\Psi(\tilde{\textbf{W}}^{avr}(t))\right] - \left(\tilde{\eta}_t - \frac{\varrho\tilde{\eta}_t^2}{2}\right)\left|\left|\nabla\Psi(\tilde{\textbf{W}}^{avr}(t))\right|\right|^2 + \frac{\varrho\tilde{\eta}_t^2}{2}\mathbb{E}\left[\left|\left|\sum_{s = 1}^{S}\hat{\nabla}\Phi_s(t) - \nabla\Psi(\tilde{\textbf{W}}^{avr}(t))\right|\right|^2\right]\nonumber\\ &+ \left(\tilde{\eta}_t - \varrho\tilde{\eta}_t^2\right)\left(\frac{1}{2}\left|\left|\nabla\Psi(\tilde{\textbf{W}}^{avr}(t))\right|\right|^2 + \frac{1}{2}\left|\left|\sum_{s = 1}^{S}\hat{\nabla}\Psi_s(t) - \nabla\Psi(\tilde{\textbf{W}}^{avr}(t))\right|\right|^2\right)\nonumber\\
=\textrm{ }& \mathbb{E}\left[\Psi(\tilde{\textbf{W}}^{avr}(t))\right] - \frac{\tilde{\eta}_t}{2}\left|\left|\nabla\Psi(\tilde{\textbf{W}}^{avr}(t))\right|\right|^2 + \frac{\varrho\tilde{\eta}_t^2}{2}\mathbb{E}\left[\left|\left|\sum_{s = 1}^{S}\hat{\nabla}\Phi_s(t) - \nabla\Psi(\tilde{\textbf{W}}^{avr}(t))\right|\right|^2\right]\nonumber\\&+ \frac{\tilde{\eta}_t - \varrho\tilde{\eta}_t^2}{2}\left|\left|\sum_{s = 1}^{S}\hat{\nabla}\Psi_s(t) - \nabla\Psi(\tilde{\textbf{W}}^{avr}(t))\right|\right|^2\label{eq_proofconv_temp2}
\end{align}
Consider that $\sum_{s = 1}^{S}\hat{\nabla}\Phi_s(t) - \nabla\Psi(\tilde{\textbf{W}}^{avr}(t)) = \sum_{s = 1}^{S}\hat{\nabla}\Phi_s(t) - \sum_{s = 1}^{S}\hat{\nabla}\Psi_s(t) + \sum_{s = 1}^{S}\hat{\nabla}\Psi_s(t) - \nabla\Psi(\tilde{\textbf{W}}^{avr}(t))$.
According to Cauchy-Schwarz inequality, we have
\begin{align*}
\mathbb{E}\left[\left|\left|\sum_{s = 1}^{S}\hat{\nabla}\Phi_s(t) - \nabla\Psi(\tilde{\textbf{W}}^{avr}(t))\right|\right|^2\right] \le 2\mathbb{E}\left[\left|\left|\sum_{s = 1}^{S}\hat{\nabla}\Phi_s(t) - \sum_{s = 1}^{S}\hat{\nabla}\Psi_s(t)\right|\right|^2\right] + 2\left|\left|\sum_{s = 1}^{S}\hat{\nabla}\Psi_s(t) - \nabla\Psi(\tilde{\textbf{W}}^{avr}(t))\right|\right|^2
\end{align*}
Since $\mathbb{E}\left[\sum_{s = 1}^{S}\hat{\nabla}\Phi_s(t)\right] = \sum_{s = 1}^{S}\hat{\nabla}\Psi_s(t)$, we have $\mathbb{E}\left[\left|\left|\sum_{s = 1}^{S}\hat{\nabla}\Phi_s(t) - \sum_{s = 1}^{S}\hat{\nabla}\Psi_s(t)\right|\right|^2\right] = \mathbb{E}\left[\left|\left|\sum_{s = 1}^{S}\hat{\nabla}\Phi_s(t)\right|\right|^2\right] - \sum_{s = 1}^{S}\left|\left|\hat{\nabla}\Psi_s(t)\right|\right|^2 \le \mathbb{E}\left[\left|\left|\sum_{s = 1}^{S}\hat{\nabla}\Phi_s(t)\right|\right|^2\right] \le \frac{K\sigma^2}{BS}$.
This implies
\begin{equation}\label{eq_proofconv_temp3}
\mathbb{E}\left[\left|\left|\sum_{s = 1}^{S}\hat{\nabla}\Phi_s(t) - \nabla\Psi(\tilde{\textbf{W}}^{avr}(t))\right|\right|^2\right] \le 2\frac{K\sigma^2}{BS} + 2\left|\left|\sum_{s = 1}^{S}\hat{\nabla}\Psi_s(t) - \nabla\Psi(\tilde{\textbf{W}}^{avr}(t))\right|\right|^2
\end{equation}
From \eqref{eq_proofconv_temp2} and \eqref{eq_proofconv_temp3}, we have
\begin{align}
\mathbb{E}\left[\Psi(\tilde{\textbf{W}}^{avr}(t+1))\right] \le\textrm{ }& \mathbb{E}\left[\Psi(\tilde{\textbf{W}}^{avr}(t))\right] - \frac{\tilde{\eta}_t}{2}\left|\left|\nabla\Psi(\tilde{\textbf{W}}^{avr}(t))\right|\right|^2 + \frac{\varrho K \sigma^2}{BS}\tilde{\eta}_t^2 + \frac{\tilde{\eta}_t + \varrho\tilde{\eta}_t^2}{2}\left|\left|\sum_{s = 1}^{S}\hat{\nabla}\Psi_s(t) - \nabla\Psi(\tilde{\textbf{W}}^{avr}(t))\right|\right|^2\label{eq_proofconv_temp4}
\end{align}

Next, we find the bound of $\left|\left|\sum_{s = 1}^{S}\hat{\nabla}\Psi_s(t) - \nabla\Psi(\tilde{\textbf{W}}^{avr}(t))\right|\right|^2$ as follows.
\begin{align}
\left|\left|\sum_{s = 1}^{S}\hat{\nabla}\Psi_s(t) - \nabla\Psi(\tilde{\textbf{W}}^{avr}(t))\right|\right|^2 \le\textrm{ }& 2\left|\left|\sum_{s = 1}^{S}\hat{\nabla}\Psi_s(t) - \sum_{s = 1}^{S}\nabla\Psi_s(\hat{\textbf{W}}_s(t))\right|\right|^2 + 2\left|\left|\sum_{s = 1}^{S}\nabla\Psi_s(\hat{\textbf{W}}_s(t)) - \nabla\Psi(\tilde{\textbf{W}}^{avr}(t))\right|\right|^2\label{eq_proofconv_temp5}
\end{align}
In the following analysis, we consider the fixed step-size $\eta_t = \eta, \forall t \ge 0$. We have $\tilde{\eta}_t = \frac{\eta}{S}$.
\begin{align}
\left|\left|\sum_{s = 1}^{S}\hat{\nabla}\Psi_s(t) - \sum_{s = 1}^{S}\nabla\Psi_s(\hat{\textbf{W}}_s(t))\right|\right|^2 \le\textrm{ }& S\sum_{s = 1}^{S}\left|\left|\hat{\nabla}\Psi_s(t) - \nabla\Psi_s(\hat{\textbf{W}}_s(t))\right|\right|^2\nonumber\\
=\textrm{ }& S\sum_{s = 1}^{S}\sum_{k = 1}^{K}\left|\left|\nabla_{\hat{\textbf{w}}_{s,k}}\Psi_s(\hat{\textbf{W}}_s(t-2K+2k)) - \nabla_{\hat{\textbf{w}}_{s,k}}\Psi_s(\hat{\textbf{W}}_s(t))\right|\right|^2\nonumber\\
\le\textrm{ }& S\sum_{s = 1}^{S}\sum_{k = 1}^{K}\left|\left|\nabla\Psi_s(\hat{\textbf{W}}_s(t-2K+2k)) - \nabla\Psi_s(\hat{\textbf{W}}_s(t))\right|\right|^2\nonumber\\
\le\textrm{ }& S\varrho^2\sum_{s = 1}^{S}\sum_{k = 1}^{K}\left|\left|\hat{\textbf{W}}_s(t-2K+2k) - \hat{\textbf{W}}_s(t)\right|\right|^2\nonumber\\
=\textrm{ }& S\varrho^2\sum_{s = 1}^{S}\sum_{k = 1}^{K}\left|\left|\sum_{\tau = \max\{0,t-2K+2k\}}^{t-1}\left(\hat{\textbf{W}}_s(\tau+1) - \hat{\textbf{W}}_s(\tau)\right)\right|\right|^2\nonumber\\
=\textrm{ }& S\varrho^2\sum_{k = 1}^{K}\left|\left|\sum_{\tau = \max\{0,t-2K+2k\}}^{t-1}\left(\hat{\textbf{W}}(\tau+1) - \hat{\textbf{W}}(\tau)\right)\right|\right|^2\nonumber\\
\le\textrm{ }& S\varrho^2\sum_{k = 1}^{K}\left\{\left(t-1-\Delta_{t,k}\right)\sum_{\tau = \Delta_{t,k}}^{t-1}\left|\left|\hat{\textbf{W}}(\tau+1) - \hat{\textbf{W}}(\tau)\right|\right|^2\right\}\label{eq_proofconv_temp6}
\end{align}
where $\Delta_{t,k} = \max\{0,t-2K+2k\}$. It is clear that $t-1-\Delta_{t,k} < 2K$.
In addition, we have
\begin{align}
\left|\left|\hat{\textbf{W}}(\tau+1) - \hat{\textbf{W}}(\tau)\right|\right|^2 =\textrm{ }& \left|\left|\hat{\textbf{W}}(\tau+1) - \left(\textbf{1}_S \otimes \textbf{I}_d\right) \tilde{\textbf{W}}^{avr}(\tau+1) + \left(\textbf{1}_S \otimes \textbf{I}_d\right) \tilde{\textbf{W}}^{avr}(\tau+1) - \left(\textbf{1}_S \otimes \textbf{I}_d\right) \tilde{\textbf{W}}^{avr}(\tau)\right.\right.\nonumber\\
&\left.\left. + \left(\textbf{1}_S \otimes \textbf{I}_d\right) \tilde{\textbf{W}}^{avr}(\tau) - \hat{\textbf{W}}(\tau)\right|\right|^2\nonumber\\
=\textrm{ }& \left|\left|\boldsymbol{\delta}(\tau+1) + \left(\textbf{1}_S \otimes \textbf{I}_d\right) \tilde{\textbf{W}}^{avr}(\tau+1) - \left(\textbf{1}_S \otimes \textbf{I}_d\right) \tilde{\textbf{W}}^{avr}(\tau) + \boldsymbol{\delta}(\tau)\right|\right|^2\nonumber\\
\le\textrm{ }& \left|\left|\boldsymbol{\delta}(\tau+1) + \textbf{1}_S \otimes \sum_{s=1}^{S}\frac{\eta}{S}\hat{\nabla}\Phi_s(\tau) + \boldsymbol{\delta}(\tau)\right|\right|^2\nonumber\\
\le\textrm{ }& 3\left|\left|\boldsymbol{\delta}(\tau+1)\right|\right|^2 + 3\left|\left|\textbf{1}_S \otimes \sum_{s=1}^{S}\frac{\eta}{S}\hat{\nabla}\Phi_s(\tau)\right|\right|^2 + 3\left|\left|\boldsymbol{\delta}(\tau)\right|\right|^2\nonumber\\
\le\textrm{ }& 3\left|\left|\boldsymbol{\delta}(\tau+1)\right|\right|^2 + 3\left|\left|\boldsymbol{\delta}(\tau)\right|\right|^2 + 3\left|\left|\eta\sum_{s=1}^{S}\hat{\nabla}\Phi_s(\tau)\right|\right|^2\\
\le\textrm{ }& 3\left|\left|\boldsymbol{\delta}(\tau+1)\right|\right|^2 + 3\left|\left|\boldsymbol{\delta}(\tau)\right|\right|^2 + 3\frac{K\sigma^2}{BS^2}\eta^2\label{eq_proofconv_temp7}
\end{align}
From \eqref{eq_proofconv_temp6} and \eqref{eq_proofconv_temp7}, we have
\begin{align}
\left|\left|\sum_{s = 1}^{S}\hat{\nabla}\Psi_s(t) - \sum_{s = 1}^{S}\nabla\Psi_s(\hat{\textbf{W}}_s(t))\right|\right|^2 &\le 2KS\varrho^2\left(3\frac{K^2\sigma^2}{BS^2}\eta^2 + 3\sum_{k = 1}^{K}\sum_{\tau = \Delta_{t,k}}^{t-1}\left(\left|\left|\boldsymbol{\delta}(\tau+1)\right|\right|^2 + \left|\left|\boldsymbol{\delta}(\tau)\right|\right|^2\right)\right)\nonumber\\
&= \frac{6K^3\varrho^2\sigma^2}{BS}\eta^2 + 6KS\varrho^2\sum_{k = 1}^{K}\sum_{\tau = \Delta_{t,k}}^{t-1}\left(\left|\left|\boldsymbol{\delta}(\tau+1)\right|\right|^2 + \left|\left|\boldsymbol{\delta}(\tau)\right|\right|^2\right)\label{eq_proofconv_temp8}
\end{align}
Consider that 
\begin{align}
\left|\left|\sum_{s = 1}^{S}\nabla\Psi_s(\hat{\textbf{W}}_s(t)) - \nabla\Psi(\tilde{\textbf{W}}^{avr}(t))\right|\right|^2 =\textrm{ }& \left|\left|\sum_{s = 1}^{S}\left(\nabla\Psi_s(\hat{\textbf{W}}_s(t)) - \nabla\Psi_s(\tilde{\textbf{W}}^{avr}(t))\right)\right|\right|^2\nonumber\\
\le\textrm{ }& S\sum_{s = 1}^{S} \left|\left|\nabla\Psi_s(\hat{\textbf{W}}_s(t)) - \nabla\Psi_s(\tilde{\textbf{W}}^{avr}(t))\right|\right|^2\nonumber\\
\le\textrm{ }& \frac{S\varrho^2(\max_s\{|\mathcal{D}_s|\})^2}{N^2}\sum_{s = 1}^{S} \left|\left|\hat{\textbf{W}}_s(t) - \tilde{\textbf{W}}^{avr}(t)\right|\right|^2\nonumber\\
=\textrm{ }& \frac{S\varrho^2\max_s\{|\mathcal{D}_s|^2\}}{N^2} \left|\left|\boldsymbol{\delta}(t)\right|\right|^2\label{eq_proofconv_temp9}
\end{align}
From \eqref{eq_proofconv_temp4}, \eqref{eq_proofconv_temp5}, \eqref{eq_proofconv_temp8} and \eqref{eq_proofconv_temp9}, we have
\begin{align}
\mathbb{E}\left[\Psi(\tilde{\textbf{W}}^{avr}(t+1))\right] \le\textrm{ }& \mathbb{E}\left[\Psi(\tilde{\textbf{W}}^{avr}(t))\right] - \frac{\eta}{2S}\left|\left|\nabla\Psi(\tilde{\textbf{W}}^{avr}(t))\right|\right|^2 + \left(\frac{\varrho K \sigma^2}{BS} + (\eta S+\varrho\eta^2)\frac{6K^3\varrho^2\sigma^2}{BS^3}\right)\frac{\eta_t^2}{S^2}\nonumber\\
&+ \frac{\eta(S+\varrho\eta)}{S^2}\left(6KS\varrho^2\sum_{k = 1}^{K}\sum_{\tau = \Delta_{t,k}}^{t-1}\left(\left|\left|\boldsymbol{\delta}(\tau+1)\right|\right|^2 + \left|\left|\boldsymbol{\delta}(\tau)\right|\right|^2\right) + \frac{S\varrho^2\max_s\{|\mathcal{D}_s|^2\}}{N^2} \left|\left|\boldsymbol{\delta}(t)\right|\right|^2\right)\label{eq_main_convergence_1}
\end{align}
Summing both sides of \eqref{eq_main_convergence_1} from $t = 0$ to $T-1$, we obtain
\begin{align*}
\mathbb{E}\left[\Psi\left(\tilde{\textbf{W}}^{avr}(T)\right)\right] - \Psi^{(0)} \le\textrm{ }& -\frac{\eta}{2S}\sum_{t = 0}^{T-1}\left|\left|\nabla\Psi\left(\tilde{\textbf{W}}^{avr}(t)\right)\right|\right|^2 + \left(\frac{\varrho K \sigma^2}{BS^3} + (\eta S +\varrho\eta^2)\frac{6K^3\varrho^2\sigma^2}{BS^5}\right)\eta^2T + \frac{(S+\varrho\eta)}{S^2}\Xi(T)\eta
\end{align*}
where $\Xi(T) = \sum_{t = 0}^{T-1}\left(6KS\varrho^2\sum_{k = 1}^{K}\sum_{\tau = \Delta_{t,k}}^{t-1}\left(\left|\left|\boldsymbol{\delta}(\tau+1)\right|\right|^2 + \left|\left|\boldsymbol{\delta}(\tau)\right|\right|^2\right) + \frac{S\varrho^2\max_s\{|\mathcal{D}_s|^2\}}{N^2} \left|\left|\boldsymbol{\delta}(t)\right|\right|^2\right)$.
Multiplying both sides of the above equation with $\frac{2S}{\eta T}$, we have
\begin{equation}\label{eq_main_convergence_2}
\frac{1}{T}\sum_{t = 0}^{T-1}\mathbb{E}\left[\left|\left|\nabla\Psi\left(\tilde{\textbf{W}}^{avr}(t)\right)\right|\right|^2\right] \le 2S\frac{\Psi^{(0)} - \mathbb{E}\left[\Psi\left(\tilde{\textbf{W}}^{avr}(T)\right)\right]}{\eta T} + 2\left(\frac{\varrho K \sigma^2}{BS^2} + (\eta S + \varrho\eta^2)\frac{6K^3\varrho^2\sigma^2}{BS^4}\right)\eta + \frac{S+\varrho\eta}{TS}\Xi(T)
\end{equation}
Let $\tilde{\varrho} = \frac{\varrho S\max_s\{|\mathcal{D}_s|\}}{N} \le S\varrho$.
Consider $\Xi(T)$.
\begin{align}
\Xi(T) &= 6KS\varrho^2\sum_{t = 0}^{T-1}\sum_{k = 1}^{K}\sum_{\tau = \Delta_{t,k}}^{t-1}\left(\left|\left|\boldsymbol{\delta}(\tau+1)\right|\right|^2 + \left|\left|\boldsymbol{\delta}(\tau)\right|\right|^2\right) + \frac{\tilde{\varrho}^2}{S}\sum_{t = 0}^{T-1}\left|\left|\boldsymbol{\delta}(t)\right|\right|^2\nonumber\\
&= 6KS\varrho^2\sum_{k = 1}^{K}\sum_{\tau = \Delta_{T,k}}^{T-1}\sum_{t = 0}^{\tau-1}\left(\left|\left|\boldsymbol{\delta}(t+1)\right|\right|^2 + \left|\left|\boldsymbol{\delta}(t)\right|\right|^2\right) + \frac{\tilde{\varrho}^2}{S}\sum_{t = 0}^{T-1}\left|\left|\boldsymbol{\delta}(t)\right|\right|^2\nonumber\\
&\le (6KS\varrho^2)K(2K)2\sum_{t = 0}^{T}\left|\left|\boldsymbol{\delta}(t)\right|\right|^2 + S \varrho^2\sum_{t = 0}^{T-1}\left|\left|\boldsymbol{\delta}(t)\right|\right|^2\nonumber\\
&\le \left(24K^3S+S\right)\varrho^2\sum_{t = 0}^{T}\left|\left|\boldsymbol{\delta}(t)\right|\right|^2\label{eq_proofconv_temp10}
\end{align}
Because $\left|\left|\boldsymbol{\delta}(t)\right|\right| \le \gamma^{t}\left|\left|\boldsymbol{\delta}(0)\right|\right| + \frac{\gamma}{1 - \gamma}\eta$, we have
\begin{align}
\sum_{t = 0}^{T}\left|\left|\boldsymbol{\delta}(t)\right|\right|^2 &= \left|\left|\boldsymbol{\delta}(0)\right|\right|^2\sum_{t = 0}^{T}\gamma^{2t} + \eta\left|\left|\boldsymbol{\delta}(0)\right|\right|\sum_{t = 0}^{T}\gamma^{t} + T\left(\frac{\gamma}{1 - \gamma}\right)^2\eta^2\nonumber\\
&\le \frac{\gamma^2}{1 - \gamma^2}\left|\left|\boldsymbol{\delta}(0)\right|\right|^2 + \frac{\gamma}{1 - \gamma}\left|\left|\boldsymbol{\delta}(0)\right|\right|\eta + T\left(\frac{\gamma}{1 - \gamma}\right)^2\eta^2\label{eq_proofconv_temp11}
\end{align}
Substituting \eqref{eq_proofconv_temp11} into \eqref{eq_proofconv_temp10}, then substituting \eqref{eq_proofconv_temp10} into \eqref{eq_main_convergence_2}, we have
\begin{equation}\label{eq_proofconv_temp12}
\frac{1}{T}\sum_{t = 0}^{T-1}\mathbb{E}\left[\left|\left|\nabla\Psi\left(\tilde{\textbf{W}}^{avr}(t)\right)\right|\right|^2\right] \le \frac{2S\left(\Psi^{(0)} - \mathbb{E}\left[\Psi\left(\tilde{\textbf{W}}^{avr}(T)\right)\right)\right]}{\eta T} + \frac{M_1}{T} + M_2\eta
\end{equation}
where 
\begin{align*}
M_1 &= \left(24K^3+1\right)\left(S + \varrho\eta\right)\varrho^2\left(\frac{\gamma^2}{1 - \gamma^2}\left|\left|\boldsymbol{\delta}(0)\right|\right|^2 + \frac{\gamma}{1 - \gamma}\left|\left|\boldsymbol{\delta}(0)\right|\right|\eta\right)\\
M_2 &= \frac{\varrho K \sigma^2}{BS^2} + (\eta S+\varrho\eta^2)\frac{6K^3\varrho^2\sigma^2}{BS^4} + \left(24K^3+1\right)\left(S + \varrho\eta\right)\varrho^2\left(\frac{\gamma}{1 - \gamma}\right)^2\eta^2\\
\end{align*}
Because $\mathbb{E}\left[\Psi\left(\tilde{\textbf{W}}^{avr}(T)\right)\right] \ge \Psi^*$, \eqref{eq_proofconv_temp12} implies \eqref{eq_theorem1_2}.
%%%%%%%%%%%%%%%%%%%%%%%%%%%%%%%%%%%%%%%%%%%%%%%%%%%%%%%%%%%%%%%%%%%%%%%%%%%%%%%%%%%%%%%%%%
%%%%%%%%%%%%%%%%%%%%%%%%%%%%%%%%%%%%%%%%%%%%%%%%%%%%%%%%%%%%%%%%%%%%%%%%%%%%%%%%%%%%%%%%%%
\section{Proof of Theorem \ref{th_main_2}}
%%%%%%%%%%%%%%%%%%%%%%%%%%%%%%%%%%%%%%%%%%%%%%%%%%%%%%%%%%%%%%%%%%%%%%%%%%%%%%%%%%%%%%%%%%
From the definition of $\Lambda(t)$, we have
\begin{align*}
\Lambda(\tau+1) = \sum\limits_{l = 0}^{\tau+1}\gamma^{\tau+2-l}\eta_l = \gamma\sum\limits_{l = 0}^{\tau+1}\gamma^{\tau+1-l}\eta_l = \gamma\sum\limits_{l = 0}^{\tau}\gamma^{\tau+1-l}\eta_l + \gamma\eta_{\tau+1} = \gamma\Lambda(\tau) + \gamma\eta_{\tau+1}
\end{align*}
Iterating recursion of the above equation is
\begin{align*}
\Lambda(\tau+\varsigma) = \gamma^{\varsigma}\Lambda(\tau) + \gamma^{\varsigma}\eta_{\tau} + \gamma^{\varsigma-1}\eta_{\tau+1} + \cdots + \gamma\eta_{\tau+\varsigma-1}
\end{align*}
Because $\eta_{l} \ge \eta_{l+1}$ for all $l \ge 0$, we have
\[\Lambda(\tau+\varsigma) \le \gamma^{\varsigma}\Lambda(\tau) + \eta_{\tau}\sum_{l=1}^{\varsigma}\]
According to the Cauchy–Schwarz inequality, for all $\tau \ge 0$, we have
\begin{align*}
\Lambda(\tau) = \sum\limits_{l = 0}^{\tau}\gamma^{\tau+1-l}\eta_l &\le \left(\sum\limits_{l = 0}^{\tau}\gamma^{2\tau+2-2l}\right)\left(\sum\limits_{l = 0}^{\tau}\eta_l^2\right)\\
&\le \frac{\gamma^2}{1 - \gamma^2}M_0
\end{align*}
For any $\epsilon > 0$, there exists $\tau^*, \varsigma^* > 0$ such that
\[\left|\left|\boldsymbol{\delta}(0)\right|\right|\gamma^{\tau^*+\varsigma^*} \le \frac{\epsilon}{3},\]
\[\gamma^{\varsigma^*}\Lambda(\tau^*) \le \frac{\epsilon}{3},\]
\[\eta_{\tau^*} \le \frac{\epsilon}{3}\frac{1-\gamma}{\gamma}\]
From \eqref{eq_main_convergence_error}, we have
\begin{equation}
\left|\left|\boldsymbol{\delta}(t)\right|\right| \le \left|\left|\boldsymbol{\delta}(0)\right|\right|\gamma^{\tau^*+\varsigma^*} + \gamma^{\varsigma^*}\Lambda(\tau^*) + \eta_{\tau^*}\sum_{l = 1}^{t-\tau^*}\gamma^l \le \frac{\epsilon}{3} + \frac{\epsilon}{3} + \eta_{\tau^*}\frac{\gamma}{1-\gamma} = \epsilon
\end{equation}
for all $t \ge \tau^*+\varsigma^*$.
Thus $\lim\limits_{t \rightarrow \infty}\left|\left|\boldsymbol{\delta}(t)\right|\right| = 0$.

By similar analysis in the previous section, we have
\begin{align}
\mathbb{E}\left[\Psi(\tilde{\textbf{W}}^{avr}(t+1))\right] \le\textrm{ }& \mathbb{E}\left[\Psi(\tilde{\textbf{W}}^{avr}(t))\right] - \frac{\tilde{\eta}_t}{2}\left|\left|\nabla\Psi(\tilde{\textbf{W}}^{avr}(t))\right|\right|^2 + \frac{\varrho K \sigma^2}{BS}\tilde{\eta}_t^2 + (1+\varrho\tilde{\eta}_t)\Delta G(t)\label{eq_proofconv2_temp1}
\end{align}
where $\Delta G(t) = \tilde{\eta}_t\left(\left|\left|\sum_{s = 1}^{S}\nabla\Psi_s(\hat{\textbf{W}}_s(t)) - \nabla\Psi(\tilde{\textbf{W}}^{avr}(t))\right|\right|^2 + \left|\left|\sum_{s = 1}^{S}\hat{\nabla}\Psi_s(t) - \sum_{s = 1}^{S}\nabla\Psi_s(\hat{\textbf{W}}_s(t))\right|\right|^2\right)$.
Summing both sides of \eqref{eq_proofconv2_temp1} from $t = 0$ to $T-1$, we obtain
\begin{equation}\label{eq_proofconv2_main}
\frac{1}{\sum_{t=0}^{T-1}\tilde{\eta}_t}\sum_{t = 0}^{T-1}\tilde{\eta}_t\left|\left|\nabla\Psi(\tilde{\textbf{W}}^{avr}(t))\right|\right| \le \frac{2}{\sum_{t=0}^{T-1}\tilde{\eta}_t}\left(\Psi^{(0)}-\Psi^* + \frac{\varrho K \sigma^2}{BS}\sum_{t=0}^{T-1}\tilde{\eta}_t^2 + (1+\varrho\tilde{\eta}_t)\sum_{t=0}^{T-1}\Delta G(t)\right)
\end{equation}

We have
\begin{align*}
\left|\left|\sum_{s = 1}^{S}\nabla\Psi_s(\hat{\textbf{W}}_s(t)) - \nabla\Psi(\tilde{\textbf{W}}^{avr}(t))\right|\right|^2 =\textrm{ }& \left|\left|\sum_{s = 1}^{S}\left(\nabla\Psi_s(\hat{\textbf{W}}_s(t)) - \nabla\Psi_s(\tilde{\textbf{W}}^{avr}(t))\right)\right|\right|^2\\
\le\textrm{ }& S\sum_{s = 1}^{S} \left|\left|\nabla\Psi_s(\hat{\textbf{W}}_s(t)) - \nabla\Psi_s(\tilde{\textbf{W}}^{avr}(t))\right|\right|^2\\
\le\textrm{ }& S\varrho\sum_{s = 1}^{S} \left|\left|\nabla\Phi_s(\hat{\textbf{W}}_s(t)) - \nabla\Phi_s(\tilde{\textbf{W}}^{avr}(t))\right|\right| \left|\left|\hat{\textbf{W}}_s(t) - \tilde{\textbf{W}}^{avr}(t)\right|\right|
\end{align*}
Because $\left|\left|\nabla\Phi_s(\hat{\textbf{W}}_s(t)) - \nabla\Phi_s(\tilde{\textbf{W}}^{avr}(t))\right|\right| \le \left|\left|\nabla\Phi_s(\hat{\textbf{W}}_s(t))\right|\right| + \left|\left|\nabla\Phi_s(\hat{\textbf{W}}_s(t))\right|\right| \le 2\sigma, \forall s,t$, and $\left(\sum_{s = 1}^{S}\left|\left|\hat{\textbf{W}}_s(t) - \tilde{\textbf{W}}^{avr}(t)\right|\right|\right)^2 \le S\sum_{s = 1}^{S}\left|\left|\hat{\textbf{W}}_s(t) - \tilde{\textbf{W}}^{avr}(t)\right|\right|^2 = S\left|\left|\boldsymbol{\delta}(t)\right|\right|$, we have
\begin{equation}\label{eq_proofconv2_temp2}
\left|\left|\sum_{s = 1}^{S}\nabla\Psi_s(\hat{\textbf{W}}_s(t)) - \nabla\Psi(\tilde{\textbf{W}}^{avr}(t))\right|\right|^2 \le 2\sigma S^{3/2}\varrho \left|\left|\boldsymbol{\delta}(t)\right|\right|
\end{equation}
Consider $\left|\left|\sum_{s = 1}^{S}\hat{\nabla}\Psi_s(t) - \sum_{s = 1}^{S}\nabla\Psi_s(\hat{\textbf{W}}_s(t))\right|\right|^2$, we have
\begin{align*}
\left|\left|\sum_{s = 1}^{S}\hat{\nabla}\Psi_s(t) - \sum_{s = 1}^{S}\nabla\Psi_s(\hat{\textbf{W}}_s(t))\right|\right|^2 \le\textrm{ }& S\sum_{s = 1}^{S}\left|\left|\hat{\nabla}\Psi_s(t) - \nabla\Psi_s(\hat{\textbf{W}}_s(t))\right|\right|^2\\
=\textrm{ }& S\sum_{s = 1}^{S}\sum_{k = 1}^{K}\left|\left|\nabla_{\hat{\textbf{w}}_{s,k}}\Psi_s(\hat{\textbf{W}}_s(t-2K+2k)) - \nabla_{\hat{\textbf{w}}_{s,k}}\Psi_s(\hat{\textbf{W}}_s(t))\right|\right|^2\\
\le\textrm{ }& 2\sigma S\sum_{s = 1}^{S}\sum_{k = 1}^{K}\left|\left|\nabla\Psi_s(\hat{\textbf{W}}_s(t-2K+2k)) - \nabla\Psi_s(\hat{\textbf{W}}_s(t))\right|\right|\\
\le\textrm{ }& 2\sigma S\varrho\sum_{s = 1}^{S}\sum_{k = 1}^{K}\left|\left|\hat{\textbf{W}}_s(t-2K+2k) - \hat{\textbf{W}}_s(t)\right|\right|\\
\le\textrm{ }& 2\sigma S\varrho \sum_{k = 1}^{K} \sum_{s = 1}^{S}\left|\left|\hat{\textbf{W}}_s(t-2K+2k) - \hat{\textbf{W}}_s(t)\right|\right|\\
\le\textrm{ }& 2\sigma S\varrho \sum_{k = 1}^{K} \left(S\sum_{s = 1}^{S}\left|\left|\hat{\textbf{W}}_s(t-2K+2k) - \hat{\textbf{W}}_s(t)\right|\right|^2\right)^{1/2}\\
\le\textrm{ }& 2S^{3/2}\sigma\varrho\sum_{k = 1}^{K} \left|\left|\hat{\textbf{W}}(t-2K+2k) - \hat{\textbf{W}}(t)\right|\right|\\
\le\textrm{ }& 2S^{3/2}\sigma\varrho\sum_{k = 1}^{K}\left(\left(t-1-\Delta_{t,k}\right)^{1/2}\sum_{\tau = \Delta_{t,k}}^{t-1}\left|\left|\hat{\textbf{W}}(\tau+1) - \hat{\textbf{W}}(\tau)\right|\right|\right)\\
\le\textrm{ }& (8S^3K)^{1/2}\sigma\varrho\sum_{k = 1}^{K}\left(\sum_{\tau = \Delta_{t,k}}^{t-1}\left|\left|\hat{\textbf{W}}(\tau+1) - \hat{\textbf{W}}(\tau)\right|\right|\right)
\end{align*}
In addition, we have
\begin{align*}
\left|\left|\hat{\textbf{W}}(\tau+1) - \hat{\textbf{W}}(\tau)\right|\right| =\textrm{ }& \left|\left|\boldsymbol{\delta}(\tau+1) + \left(\textbf{1}_S \otimes \textbf{I}_d\right) \tilde{\textbf{W}}^{avr}(\tau+1) - \left(\textbf{1}_S \otimes \textbf{I}_d\right) \tilde{\textbf{W}}^{avr}(\tau) + \boldsymbol{\delta}(\tau)\right|\right|\\
\le\textrm{ }& \left|\left|\boldsymbol{\delta}(\tau+1)\right|\right| + \left|\left|\boldsymbol{\delta}(\tau)\right|\right| + \tilde{\eta}_{\tau}\sigma\sqrt{\frac{K}{BS}}
\end{align*}
Because $\tilde{\eta}_{\tau+1} \le \tilde{\eta}_{\tau}, \forall \tau \ge 0$
\begin{align*}
\Delta G(t) &\le 2\sigma S^{3/2}\varrho \left|\left|\boldsymbol{\delta}(t)\right|\right|\tilde{\eta}_t + (8S^3K)^{1/2}\varrho \sum_{k = 1}^{K}\left(\sum_{\tau = \Delta_{t,k}}^{t-1}\left(\left|\left|\boldsymbol{\delta}(\tau+1)\right|\right|\tilde{\eta}_{\tau+1} + \left|\left|\boldsymbol{\delta}(\tau)\right|\right|\tilde{\eta}_{\tau} + \sigma\sqrt{\frac{K}{BS}}\tilde{\eta}_{\tau+1}\tilde{\eta}_{\tau}\right)\right).
\end{align*}
Define \[\Theta(T) = \sum_{t = 0}^{T-1}\sum_{k = 1}^{K}\sum_{\tau = \Delta_{t,k}}^{t-1}\tilde{\eta}_{\tau+1}\tilde{\eta}_{\tau},\] 
\[\Omega(T) = \sum_{t = 0}^{T-1}\left(\sum_{k = 1}^{K}\sum_{\tau = \Delta_{t,k}}^{t-1}\left(\left|\left|\boldsymbol{\delta}(\tau+1)\right|\right|\tilde{\eta}_{\tau+1} + \left|\left|\boldsymbol{\delta}(\tau)\right|\right|\tilde{\eta}_{\tau}\right) + \left|\left|\boldsymbol{\delta}(t)\right|\right|\tilde{\eta}_{t}\right).\]
We have
\begin{align*}
\sum_{t = 0}^{T-1}\sum_{k = 1}^{K}\sum_{\tau = \Delta_{t,k}}^{t-1}\tilde{\eta}_{\tau+1}\tilde{\eta}_{\tau} &\le \frac{1}{2}\sum_{t = 0}^{T-1}\sum_{k = 1}^{K}\sum_{\tau = \Delta_{t,k}}^{t-1}\left(\tilde{\eta}_{\tau+1}^2 + \tilde{\eta}_{\tau}^2\right)\\
&= \frac{1}{2} \sum_{k = 1}^{K}\sum_{\tau = \Delta_{T,k}}^{T-1}\sum_{t = 0}^{\tau-1} \left(\tilde{\eta}_{t+1}^2 + \tilde{\eta}_{t}^2\right)\\
&\le \frac{1}{2}K(2K)\sum_{t = 0}^{T-1} \left(\tilde{\eta}_{t+1}^2 + \tilde{\eta}_{t}^2\right)\\
&\le \frac{1}{2}K(2K) 2\frac{M_0}{S^2}\\
&= 2K^2\frac{M_0}{S^2}
\end{align*}
So we have $\Theta(T) \le 2K^2\frac{M_0}{S^2}$.

Now we find the bounds for $\Omega(T)$.
\begin{align*}
\Omega(T) &= \sum_{t = 0}^{T-1}\sum_{k = 1}^{K}\sum_{\tau = \Delta_{t,k}}^{t-1}\left(\left|\left|\boldsymbol{\delta}(\tau+1)\right|\right|\tilde{\eta}_{\tau+1} + \left|\left|\boldsymbol{\delta}(\tau)\right|\right|\tilde{\eta}_{\tau}\right) + \sum_{t = 0}^{T-1}\left|\left|\boldsymbol{\delta}(t)\right|\right|\tilde{\eta}_{t}\\
&= \sum_{k = 1}^{K}\sum_{\tau = \Delta_{T,k}}^{T-1}\sum_{t = 0}^{\tau-1}\left(\left|\left|\boldsymbol{\delta}(t+1)\right|\right|\tilde{\eta}_{t+1} + \left|\left|\boldsymbol{\delta}(t)\right|\right|\tilde{\eta}_{t}\right) + \sum_{t = 0}^{T-1}\left|\left|\boldsymbol{\delta}(t)\right|\right|\tilde{\eta}_{t}\\
&\le K(2K)2\sum_{t = 0}^{T}\left|\left|\boldsymbol{\delta}(t)\right|\right|\tilde{\eta}_{t} + \sum_{t = 0}^{T-1}\left|\left|\boldsymbol{\delta}(t)\right|\right|\tilde{\eta}_{t}\\
&\le \left(4K^2+1\right)\sum_{t = 0}^{T}\left|\left|\boldsymbol{\delta}(t)\right|\right|\tilde{\eta}_{t}
\end{align*}
In addition, we have
\begin{align*}
\left|\left|\boldsymbol{\delta}(t)\right|\right|\tilde{\eta}_t \le \frac{\gamma^{t}}{S}\left|\left|\boldsymbol{\delta}(0)\right|\right|\eta_t + \frac{1}{S}\sum\limits_{\tau = 0}^{t-1}\gamma^{t-\tau}\eta_{\tau}\eta_t \le \frac{\gamma^{t}}{S}\left|\left|\boldsymbol{\delta}(0)\right|\right|\eta_t + \frac{1}{S}\sum\limits_{\tau = 0}^{t-1}\gamma^{t-\tau}\left(\frac{1}{2}\eta_t^2 + \frac{1}{2}\eta_{\tau}^2\right)
\end{align*}
Note that
\begin{align*}
\sum\limits_{t = 0}^{T}\gamma^{t}\left|\left|\boldsymbol{\delta}(0)\right|\right|\eta_t \le \eta_0\left|\left|\boldsymbol{\delta}(0)\right|\right|\sum\limits_{t = 0}^{T}\gamma^{t} \le \frac{\eta_0\left|\left|\boldsymbol{\delta}(0)\right|\right|}{1-\gamma}
\end{align*}
\begin{align*}
\sum\limits_{t = 0}^{T}\sum\limits_{\tau = 0}^{t-1}\gamma^{t-\tau}\left(\frac{1}{2}\eta_t^2 + \frac{1}{2}\eta_{\tau}^2\right) \le\textrm{ }& \left(2\sum\limits_{t = 1}^{T}\gamma^t\right)\left(\sum\limits_{t = 0}^{T}\eta_t^2\right) \le \frac{2\gamma M_0}{1-\gamma}
\end{align*}
Thus $\sum\limits_{t = 0}^{T} \Delta G(t) \le 4\sqrt{\frac{2K}{S}}\varrho K^2 M_0 + \left((8S^3K)^{1/2}\left(4K^2+1\right)\varrho + 2\sigma S^{3/2}\varrho\right)\left(\frac{\eta_0\left|\left|\boldsymbol{\delta}(0)\right|\right|}{1-\gamma} + \frac{2\gamma M_0}{1-\gamma}\right) < \infty$.
Then \eqref{eq_proofconv2_main} guarantees that
\begin{equation}\label{eq_proofconv2_temp3}
\lim\limits_{T \rightarrow \infty} \frac{1}{\sum_{t=0}^{T-1}\tilde{\eta}_t}\sum_{t = 0}^{T-1}\tilde{\eta}_t\left|\left|\nabla\Psi(\tilde{\textbf{W}}^{avr}(t))\right|\right| = 0.
\end{equation}
Note that \[\frac{1}{\sum_{t=0}^{T-1}\tilde{\eta}_t}\sum_{t = 0}^{T-1}\tilde{\eta}_t\left|\left|\nabla\Psi(\tilde{\textbf{W}}^{avr}(t))\right|\right| = \frac{1}{\sum_{t=0}^{T-1}\eta_t}\sum_{t = 0}^{T-1}\eta_t\left|\left|\nabla\Psi(\tilde{\textbf{W}}^{avr}(t))\right|\right|.\]
As $\tilde{\textbf{W}}^{avr}(\tau)$ is chosen randomly from $\{\tilde{\textbf{W}}^{avr}(t): 0 \le t \le T-1\}$ with probabilities proportional to $\{\eta_0, \eta_1, \dots, \eta_{T-1}\}$, we have $\mathbb{E}\left[\tilde{\textbf{W}}^{avr}(\tau)\right] = \frac{1}{\sum_{t=0}^{T-1}\eta_t}\sum_{t = 0}^{T-1}\eta_t\left|\left|\nabla\Psi(\tilde{\textbf{W}}^{avr}(t))\right|\right|$.
So \eqref{eq_proofconv2_temp3} implies \eqref{eq_theorem2_2}.
%%%%%%%%%%%%%%%%%%%%%%%%%%%%%%%%%%%%%%%%%%%%%%%%%%%%%%%%%%%%%%%%%%%%%%%%%%%%%%%%%%%%%%%%%%
%\newpage
\section{Proposed distributed training algorithm}
%%%%%%%%%%%%%%%%%%%%%%%%%%%%%%%%%%%%%%%%%%%%%%%%%%%%%%%%%%%%%%%%%%%%%%%%%%%%%%%%%%%%%%%%%%
\begin{algorithm}[htb]
   \caption{The proposed distributed training method}
   \label{alg_proposed}
\begin{algorithmic}
   \STATE {\bfseries Input:}\\Weights $\textbf{W}(0) = \left[\begin{matrix}(\hat{\textbf{W}}_1(0))^T & \cdots & (\hat{\textbf{W}}_S(0))^T\end{matrix}\right]^T$\\
   where $\hat{\textbf{W}}_s(0) = \left[\begin{matrix}(\hat{\textbf{w}}_{s,1}(0))^T & \cdots & (\hat{\textbf{w}}_{s,K}(0))^T\end{matrix}\right]^T$ for all $1 \le s \le S$.\\
   Step-size sequences $\{\eta_0, \eta_2, \dots, \eta_{T-1}\}$  
   \FOR{$t=0$ {\bfseries to} $T-1$: agent $(s,k), \forall 1 \le s \le S, 1 \le k \le K$ {\bfseries in parallel}}
   \IF{$k = 1$}
   \STATE sample a mini-batch $\mathcal{B}_s(t) \subset \mathcal{D}_s$
   \ENDIF
   \STATE compute $\textbf{h}_{l}^{(s,k)}(t-k+1)$ for all $p_k \le l \le q_k$
   \STATE compute $\frac{\partial \phi(\boldsymbol{\chi}^{(n)},\tilde{\textbf{W}}_s(t-2K+k+1))}{\partial \hat{\textbf{w}}_{s,k}}$ for all $\boldsymbol{\chi}^{(n)} \in \mathcal{B}(t-2K+k+1)$
   \IF{$k > 1$}
   \STATE send $\frac{\partial \phi(\boldsymbol{\chi}^{(n)},\tilde{\textbf{W}}_s(t-2K+k+1))}{\partial \textbf{w}_{p_{k}}}$ to $(s,k-1)$ for all $\boldsymbol{\chi}^{(n)} \in \mathcal{B}(t-2K+k+1)$
   \STATE receive $\textbf{h}_{q_{k-1}}^{(s,k-1)}(t-k+2)$ from $(s,k-1)$
   \ENDIF
   \IF{$k < K$}
   \STATE send $\textbf{h}_{q_k}^{(s,k)}(t-k+1)$ to $(s,k+1)$
   \STATE receive $\frac{\partial \phi(\boldsymbol{\chi}^{(n)},\tilde{\textbf{W}}_s(t-2K+k+2))}{\partial \textbf{w}_{p_{k+1}}}$ from $(s,k+1)$ for all $\boldsymbol{\chi}^{(n)} \in \mathcal{B}(t-2K+k+2)$
   \ENDIF
   \STATE $\hat{\textbf{u}}_{s,k}(t) \gets$ (\ref{eq_dl_update}a)
   \STATE send $\hat{\textbf{u}}_{s,k}(t)$ to $(r,k) \in \mathcal{N}_{s,k}$
   \STATE receive $\hat{\textbf{u}}_{r,k}(t)$ from $(r,k) \in \mathcal{N}_{s,k}$
   \STATE $\hat{\textbf{w}}_{s,k}(t) \gets$ (\ref{eq_dl_update}b)
   \ENDFOR    
\end{algorithmic}
\end{algorithm}
%%%%%%%%%%%%%%%%%%%%%%%%%%%%%%%%%%%%%%%%%%%%%%%%%%%%%%%%%%%%%%%%%%%%%%%%%%%%%%%%%%%%%%%%%%
\end{document}